\documentclass{article}
\usepackage{iclr2016_conference,times}

\usepackage[page,header]{appendix}
\usepackage{lipsum}
\usepackage{amsmath}
\usepackage{amssymb}
\usepackage{amsthm}
\usepackage{thmtools}
\usepackage{mathtools}
\usepackage{bm}
\usepackage{relsize}
\usepackage{natbib}
\usepackage{url}
\usepackage{algorithm}
\usepackage{algorithmic}
\usepackage{wrapfig}
\usepackage{subfigure}
\usepackage{booktabs}
\usepackage{multirow}
\usepackage{colortbl} 
\usepackage{bigdelim}

\usepackage{rotating}
\usepackage{color, colortbl}

\definecolor{mColor1}{rgb}{0.95,0.95,0.95}

\usepackage{hyperref}

\usepackage{array}


\newtheorem{theorem}{Theorem}

\newtheorem{lemma}{Lemma}

%
\newcommand\Ba{\bm{a}}
\newcommand\Bb{\bm{b}}
\newcommand\Bc{\bm{c}}

\newcommand\Bg{\bm{g}}

\newcommand\Bu{\bm{u}}

\newcommand\Bw{\bm{w}}
\newcommand\Bx{\bm{x}}

\newcommand\Bz{\bm{z}}
%
%
\newcommand\BA{\bm{A}}
\newcommand\BB{\bm{B}}
\newcommand\BC{\bm{C}}

\newcommand\BF{\bm{F}}

\newcommand\BK{\bm{K}}

\newcommand\BM{\bm{M}}

\newcommand\BS{\bm{S}}

\newcommand\BU{\bm{U}}

\newcommand\BX{\bm{X}}

%
%

%
%

%
%

\newcommand\BZe{\bm{0}}

%
%
\newcommand\bbR{\mathbb{R}}

%
%

%
%

\newcommand\EXP{\mathbf{\mathrm{E}}}

\newcommand\VAR{\mathbf{\mathrm{Var}}}
\newcommand\COV{\mathbf{\mathrm{Cov}}}

\iclrfinalcopy

\title{Fast and Accurate Deep Network Learning by
  Exponential Linear Units (ELUs)}

\author{Djork-Arn{\'e} Clevert,
Thomas Unterthiner
\&
Sepp Hochreiter\\
Institute of Bioinformatics\\
Johannes Kepler University, Linz, Austria\\
\texttt{\{okko,unterthiner,hochreit\}@bioinf.jku.at}
}

\begin{document}

\maketitle

\begin{abstract}
We introduce the ``exponential linear unit'' (ELU)
which speeds up learning in deep neural networks and
leads to higher classification accuracies.
Like rectified linear units (ReLUs), leaky ReLUs
(LReLUs) and parametrized ReLUs (PReLUs),
ELUs alleviate the vanishing gradient problem via the identity
for positive values.
However ELUs have improved learning characteristics compared to
the units with other activation functions.
In contrast to ReLUs, ELUs have negative values which allows them
to push mean unit activations closer to zero like
batch normalization but with lower computational complexity.
Mean shifts toward zero speed up learning by bringing
the normal gradient closer to the unit natural gradient
because of a reduced bias shift effect.
While LReLUs and PReLUs have negative values, too, they
do not ensure a noise-robust deactivation state.
ELUs saturate to a negative value with smaller inputs and thereby
decrease the forward propagated variation and information.
Therefore ELUs code the degree of presence of
particular phenomena in the input, while they
do not quantitatively model the degree of their absence.

In experiments, ELUs lead not only to faster learning, but also to
significantly better generalization performance than ReLUs and LReLUs
on networks with more than 5 layers.
On CIFAR-100 ELUs networks significantly outperform ReLU networks with batch
normalization while batch normalization does not improve ELU networks.
ELU networks are among the top 10
reported CIFAR-10 results and yield the best published result on CIFAR-100,
without resorting to multi-view evaluation or model averaging.
On ImageNet, ELU networks considerably speed up learning
compared to a ReLU network with the same architecture, obtaining
less than 10\% classification error for a single crop,
single model network.
\end{abstract}

\section{Introduction}
\label{sec:intro}

Currently the most popular activation function for neural networks is
the rectified linear unit (ReLU), which was first proposed for
restricted Boltzmann machines \citep{Nair:10} and then successfully used
for neural networks \citep{Glorot:11}.
The ReLU activation function is the identity for positive
arguments and zero otherwise.
Besides producing sparse codes,
the main advantage of ReLUs is that they alleviate the vanishing
gradient problem \citep{Hochreiter:98fuzzy, Hochreiter:2001}
since the derivative of 1 for positive
values is not contractive \citep{Glorot:11}.
However ReLUs are non-negative and, therefore, have a mean
activation larger than zero.

Units that have a non-zero mean activation act as bias for
the next layer.
If such units do not cancel each other out,
learning causes a {\em bias shift} for units in next layer.
The more the units are correlated, the higher their bias shift.
We will see that Fisher optimal learning, i.e., the natural
gradient \citep{Amari:98},
would correct for the bias shift
by adjusting the weight updates.
Thus, less bias shift brings the standard gradient closer to
the natural gradient and speeds up learning.
We aim at activation functions that push activation means closer to zero
to decrease the bias shift effect.

Centering the activations at zero
has been proposed in order to keep the off-diagonal entries of the
Fisher information matrix small \citep{Raiko:12}.
For neural network it is known that centering the activations
speeds up learning \citep{LeCun:91,LeCun:98,Schraudolph:98}.
%
``Batch normalization'' also centers activations with the goal
to counter the internal covariate shift \citep{Ioffe:15}.
Also the Projected Natural Gradient Descent algorithm (PRONG)
centers the activations by implicitly whitening them
\citep{Desjardins:15}.

An alternative to centering is to push the mean activation toward zero by
an appropriate activation function.
Therefore $tanh$ has been preferred over
logistic functions \citep{LeCun:91,LeCun:98}.
Recently ``Leaky ReLUs'' (LReLUs)
that replace the negative part of the ReLU with a linear function
have been shown to be superior to ReLUs \citep{Maas:13}.
Parametric Rectified Linear Units (PReLUs) generalize LReLUs
by learning the slope of the negative part which yielded
improved learning behavior on large image benchmark data sets \citep{He:15}.
Another variant are
Randomized Leaky Rectified Linear Units (RReLUs) which randomly sample
the slope of the negative part which raised the performance on
image benchmark datasets and convolutional networks \citep{Xu:15}.

In contrast to ReLUs, activation functions like
LReLUs, PReLUs, and RReLUs do not
ensure a noise-robust deactivation state.
We propose an activation function that has negative
values to allow for mean activations close to zero,
but which saturates to a negative value with smaller arguments.
The saturation decreases the variation of the units if
deactivated, so the precise deactivation argument is less relevant.
Such an activation function can code the degree of presence of
particular phenomena in the input, but does
not quantitatively model the degree of their absence.
Therefore, such an activation function is more robust to noise.
Consequently, dependencies between
coding units are much easier to model and much easier to interpret
since only activated code units carry much information.
Furthermore, distinct concepts are much less likely to interfere with
such activation functions since the deactivation state is
non-informative, i.e.\ variance decreasing.

\section{Bias Shift Correction Speeds Up Learning}
\label{sec:naturalGradient}
To derive and analyze the bias shift effect mentioned in
the introduction, we utilize the natural gradient.
The natural gradient corrects the gradient direction with
the inverse Fisher information matrix and, thereby, enables
Fisher optimal learning, which ensures
the steepest descent in the Riemannian parameter manifold
and Fisher efficiency for online learning \citep{Amari:98}.
The recently introduced Hessian-Free Optimization
technique \citep{Martens:10}
and the Krylov Subspace Descent methods \citep{Vinyals:12}
use an extended Gauss-Newton approximation of the Hessian,
therefore they can be interpreted as versions of natural
gradient descent \citep{Pascanu:14}.

Since for neural networks the
Fisher information matrix is typically too expensive to compute,
different approximations
of the natural gradient have been proposed.
Topmoumoute Online natural Gradient
Algorithm (TONGA) \citep{LeRoux:08} uses a
low-rank approximation of natural gradient descent.
FActorized Natural Gradient (FANG) \citep{Grosse:15}
estimates the natural gradient via an approximation of
the Fisher information matrix by a Gaussian graphical model.
The Fisher information matrix can be approximated by a block-diagonal matrix,
where unit or quasi-diagonal natural gradients are used \citep{Olivier:13}.
Unit natural gradients or ``Unitwise Fisher's scoring'' \citep{Kurita:93}
are based on natural gradients for perceptrons \citep{Amari:98,Yang:98}.
We will base our analysis on the unit natural gradient.

We assume a parameterized probabilistic model $p(\Bx;\Bw)$
with parameter vector $\Bw$ and data $\Bx$.
The training data are $\BX=(\Bx_1,\ldots,\Bx_N) \in
\bbR^{(d+1)\times N}$ with
$\Bx_n = (\Bz_n^T,y_n)^T \in \bbR^{d+1}$, where $\Bz_n$ is the
input for example $n$ and $y_n$ is its label.
$L(p(.;\Bw),\Bx)$ is the loss of example $\Bx=(\Bz^T,y)^T$ using model
$p(.;\Bw)$. The average loss on the
training data $\BX$ is the empirical risk $R_{\mathrm{emp}}(p(.;\Bw),\BX)$.
Gradient descent updates the
weight vector $\Bw$ by
$\Bw^{\mathrm{new}} =  \Bw^{\mathrm{old}} - \eta \nabla_{\Bw} R_{\mathrm{emp}}$
where $\eta$ is the learning rate.
The {\em natural gradient} is the inverse Fisher
information matrix $\tilde{\BF}^{-1}$ multiplied by the gradient of the empirical
risk:
$\nabla_{\Bw}^{\mathrm{nat}} R_{\mathrm{emp}}  =  \tilde{\BF}^{-1} \nabla_{\Bw} R_{\mathrm{emp}}$.
For a  multi-layer perceptron
$\Ba$ is the unit activation vector and $a_0=1$ is the
bias unit activation.
We consider the ingoing weights to unit $i$, therefore we drop the
index $i$: $w_j=w_{ij}$ for the weight from unit $j$
to unit $i$, $a=a_i$ for the activation, and $w_{0}$ for the bias weight of unit $i$.
The activation function $f$ maps
the net input $\mathrm{net}=\sum_{j} w_{j} a_j$ of unit $i$ to its activation
$a =  f(\mathrm{net})$.
For computing the Fisher information matrix, the derivative of
the log-output probability
$\frac{\partial }{\partial w_{j}} \ln p(\Bz; \Bw)$ is required.
Therefore we define the $\delta$ at unit $i$ as
$\delta = \frac{\partial }{\partial
\mathrm{net}}  \ln p(\Bz; \Bw)$,
which can be computed via backpropagation, but using
the log-output probability instead of the conventional loss function.
The derivative is
$\frac{\partial }{\partial w_{j}} \ln
p(\Bz; \Bw ) =  \delta  a_j$.

We restrict the Fisher information matrix to weights leading to
unit $i$ which is the {\em unit Fisher information matrix} $\BF$.
$\BF$ captures only the interactions of weights to
unit $i$. Consequently, the unit natural gradient only corrects
the interactions of weights to unit $i$, i.e.\ considers the
Riemannian parameter manifold only in a subspace.
The unit Fisher information matrix is
\begin{align}
&\left[\BF(\Bw)\right]_{kj} \ = \  \EXP_{p(\Bz;\Bw)} \! \left(
 \frac{\partial \ln p(\Bz;\Bw) }{\partial w_{k}}
\ \frac{\partial \ln p(\Bz;\Bw) }{\partial w_{j}} \right)
 \ = \ \EXP_{p(\Bz;\Bw)} (\delta^2 \ a_k \ a_j )  \ .
\end{align}
Weighting the activations by
$\delta^2$ is equivalent to
adjusting the probability of drawing inputs $\Bz$. Inputs $\Bz$ with large
$\delta^2$ are drawn with higher probability.
Since $0 \leq \delta^2=\delta^2(\Bz)$, we can define a
distribution  $q(\Bz)$:
\begin{align}
q(\Bz) \ &= \ \delta^2(\Bz) \
  p(\Bz) \ \left(\int \delta^2(\Bz) \ p(\Bz) \ d\Bz \right)^{-1}
\ = \ \delta^2(\Bz) \
  p(\Bz) \ \EXP_{p(\Bz)}^{-1} (\delta^2 )  \ .
\end{align}
Using $q(\Bz)$, the entries of $\BF$ can be expressed as second moments:
\begin{align}
\left[\BF(\Bw)\right]_{kj} \ &= \ \EXP_{p(\Bz)} (
  \delta^2  \ a_k \ a_j )
\ = \
 \int  \delta^2  \ a_k \ a_j \ p(\Bz) \ d\Bz
\ = \  \EXP_{p(\Bz)} (\delta^2 ) \ \EXP_{q(\Bz)}( a_k \ a_j ) \ .
\end{align}

If the bias unit is $a_0=1$ with weight $w_0$
then the weight vector can be divided
into a bias part $w_0$ and the rest $\Bw$: $(\Bw^T,w_0)^T$.
For the row $\Bb = \left[\BF(\Bw)\right]_{0}$
that corresponds to the bias weight, we have:
\begin{align}
\label{eq:b}
\Bb \ &= \ \EXP_{p(\Bz)} (\delta^2 \Ba )
\ = \ \EXP_{p(\Bz)} (\delta^2 ) \ \EXP_{q(\Bz)}
(\Ba) \ = \  \COV_{p(\Bz)} ( \delta^2 , \Ba ) \ + \
\EXP_{p(\Bz)} (\Ba) \ \EXP_{p(\Bz)} ( \delta^2 )
\ .
\end{align}
The next Theorem~\ref{th:th1} gives the correction of the standard
gradient by the unit natural gradient where
the bias weight is treated separately (see also \citet{Yang:98}).
\begin{theorem}
\label{th:th1}
The unit natural gradient corrects the weight update
$(\Delta \Bw^T , \Delta w_0)^T$ to a unit $i$ by
following affine transformation of
the gradient  $\nabla_{(\Bw^T,w_0)^T}  R_{\mathrm{emp}} =(\Bg^T , g_0)^T$:
\begin{align}
\begin{pmatrix}
\Delta \Bw \\
\Delta w_0
\end{pmatrix}
\ &= \
\begin{pmatrix}
\BA^{-1} \ \left( \Bg \ - \ \Delta w_0 \ \Bb \right) \\
s \ \left( g_0 \ - \ \Bb^T \ \BA^{-1} \Bg \right)
\end{pmatrix} \ ,
\end{align}
where $\BA= \left[\BF(\Bw)\right]_{\neg 0,\neg 0} =
\EXP_{p(\Bz)}( \delta^2 )  \EXP_{q(\Bz)}(\Ba \Ba^T)$
is the unit Fisher information matrix without row 0 and column 0
corresponding to the bias weight.
The vector $\Bb=\left[\BF(\Bw)\right]_{0}$ is the zeroth column of
$\BF$ corresponding to the bias weight,
and the positive scalar $s$ is
\begin{align}
s \ &= \     \EXP_{p(\Bz)}^{-1} ( \delta^2 ) \ \left( 1 \ + \
\EXP_{q(\Bz)}^T(\Ba)  \ \VAR_{q(\Bz)}^{-1} (\Ba) \ \EXP_{q(\Bz)} (\Ba) \right) \ ,
\end{align}
where $\Ba$ is the vector of activations of units with weights to unit $i$ and
$q(\Bz) =  \delta^2(\Bz) p(\Bz)
\EXP_{p(\Bz)}^{-1} (\delta^2 )$.
\end{theorem}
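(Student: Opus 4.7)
The plan is to view this as a straightforward block-matrix inversion of the unit Fisher information matrix $\BF$, paired with a Sherman--Morrison computation to express the Schur complement in the form stated for $s$.

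First I would write $\BF$ in $2\times 2$ block form with respect to the split $(\Bw,w_0)$:
\begin{align}
\BF \ &= \ \begin{pmatrix} \BA & \Bb \\ \Bb^T & c \end{pmatrix},
\end{align}
where $\BA = [\BF]_{\neg 0,\neg 0}$, $\Bb = [\BF]_0$, and $c = [\BF]_{00}$. Since $a_0=1$, the formula $[\BF]_{kj} = \EXP_{p(\Bz)}(\delta^2)\,\EXP_{q(\Bz)}(a_k a_j)$ derived earlier in the paper immediately gives $c = \EXP_{p(\Bz)}(\delta^2)$, $\Bb = \EXP_{p(\Bz)}(\delta^2)\,\EXP_{q(\Bz)}(\Ba)$, and $\BA = \EXP_{p(\Bz)}(\delta^2)\,\EXP_{q(\Bz)}(\Ba \Ba^T)$, all as required by the theorem. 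Applying the standard block inversion formula, the natural gradient $\BF^{-1}(\Bg^T,g_0)^T$ equals
\begin{align}
\begin{pmatrix} \BA^{-1}\Bg + \BA^{-1}\Bb\,\tilde s\,\Bb^T \BA^{-1}\Bg - \BA^{-1}\Bb\,\tilde s\,g_0 \\ \tilde s\bigl(g_0 - \Bb^T \BA^{-1}\Bg\bigr) \end{pmatrix}
\end{align}
with Schur complement $\tilde s = (c-\Bb^T\BA^{-1}\Bb)^{-1}$. Reading off the bottom entry yields $\Delta w_0 = \tilde s(g_0-\Bb^T\BA^{-1}\Bg)$, and substituting this back into the top block gives $\Delta \Bw = \BA^{-1}(\Bg - \Delta w_0\,\Bb)$, which is exactly the claimed affine correction.

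The only real work is to identify $\tilde s$ with the formula stated for $s$. Write $e = \EXP_{p(\Bz)}(\delta^2)$ and $\Bmu = \EXP_{q(\Bz)}(\Ba)$, so $\Bb = e\Bmu$, $c=e$, and
\begin{align}
\BA \ = \ e\,\EXP_{q(\Bz)}(\Ba \Ba^T) \ = \ e\bigl(\VAR_{q(\Bz)}(\Ba) + \Bmu\Bmu^T\bigr).
\end{align}
Sherman--Morrison on the rank-one update $\VAR_{q(\Bz)}(\Ba)+\Bmu\Bmu^T$ gives
\begin{align}
\Bb^T \BA^{-1} \Bb \ = \ e\,\Bmu^T\bigl(\VAR_{q(\Bz)}(\Ba)+\Bmu\Bmu^T\bigr)^{-1}\Bmu \ = \ e\,\frac{\alpha}{1+\alpha},
\end{align}
where $\alpha = \Bmu^T\VAR_{q(\Bz)}^{-1}(\Ba)\,\Bmu$. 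Hence $c-\Bb^T\BA^{-1}\Bb = e/(1+\alpha)$ and $\tilde s = e^{-1}(1+\alpha)$, which is precisely the expression given for $s$; positivity of $s$ follows because $\VAR_{q(\Bz)}(\Ba)$ is positive semidefinite (and is inverted, so assumed positive definite) and $e>0$.

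The main obstacle, such as it is, is the Sherman--Morrison step: one has to recognize that the unit Fisher block $\BA$ is a variance plus a rank-one outer product of the $q$-mean, and then carefully simplify $\alpha - \alpha^2/(1+\alpha) = \alpha/(1+\alpha)$ so that the $1+\alpha$ factor survives in the numerator of $s$. Everything else is bookkeeping: the block inversion formula supplies the structure, and the definitions of $\BA$, $\Bb$, and $c$ in terms of $p$- and $q$-expectations are already established in the preceding display equations.
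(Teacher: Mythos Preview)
Your proposal is correct and follows essentially the same approach as the paper: block-inverse the unit Fisher matrix to obtain the affine update, then use Sherman--Morrison on $\EXP_{q(\Bz)}(\Ba\Ba^T)=\VAR_{q(\Bz)}(\Ba)+\Bmu\Bmu^T$ to rewrite the Schur complement as $s=\EXP_{p(\Bz)}^{-1}(\delta^2)(1+\alpha)$. The only cosmetic difference is that the paper delegates these two steps to Lemmas~\ref{th:lemma1} and~\ref{th:lemma2}, whereas you carry them out inline.
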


\begin{proof}
Multiplying the inverse Fisher matrix $\BF^{-1}$ with the separated
gradient $\nabla_{(\Bw^T,w_0)^T}  R_{\mathrm{emp}}((\Bw^T,w_0)^T,\BX) =(\Bg^T , g_0)^T$
gives the weight update $(\Delta \Bw^T , \Delta w_0)^T$:
\begin{align}
\begin{pmatrix}
\Delta \Bw \\
\Delta w_0
\end{pmatrix}
\ &= \
\begin{pmatrix}
\BA & \Bb \\
\Bb^T & c
\end{pmatrix}^{-1} \
\begin{pmatrix}
\Bg \\
g_0
\end{pmatrix}
\ = \
\begin{pmatrix}
\BA^{-1} \ \Bg \ + \ \Bu \ s^{-1} \Bu^T \Bg \ + \ g_o \ \Bu \\
 \Bu^T \Bg \ + \ s \ g_0
\end{pmatrix} \ .
\end{align}
where
\begin{align}
\Bb \ &= \ \left[\BF(\Bw)\right]_{0} \quad , \quad
c  \ = \ \left[\BF(\Bw)\right]_{00} \quad , \quad \Bu \ = \ - \  s \
\BA^{-1} \ \Bb \quad , \quad s \ = \ \left( c \ - \ \Bb^T\BA^{-1} \Bb \right)^{-1} \ .
\end{align}

The previous formula is derived in
Lemma~\ref{th:lemma1} in the appendix.
Using $\Delta w_0$ in the update gives
\begin{align}
\label{eq:updateW0}
\begin{pmatrix}
\Delta \Bw \\
\Delta w_0
\end{pmatrix}
\ &= \
\begin{pmatrix}
\BA^{-1} \ \Bg \ + \ s^{-1} \Bu \ \Delta w_0 \\
 \Bu^T \Bg \ + \ s \ g_0
\end{pmatrix} \quad , \quad
\begin{pmatrix}
\Delta \Bw \\
\Delta w_0
\end{pmatrix}
\ = \
\begin{pmatrix}
\BA^{-1} \ \left( \Bg \ - \ \Delta w_0 \ \Bb \right) \\
s \ \left( g_0 \ - \ \Bb^T \ \BA^{-1} \Bg \right)
\end{pmatrix} \ .
\end{align}
The right hand side is obtained by inserting
$\Bu=  - s \BA^{-1} \Bb$ in the left hand side
update.
Since $c=F_{00}=\EXP_{p(\Bz)} ( \delta^2 )$,
$\Bb= \EXP_{p(\Bz)} ( \delta^2 )  \EXP_{q(\Bz)} (\Ba)$,
and $\BA= \EXP_{p(\Bz)} ( \delta^2 )  \EXP_{q(\Bz)}(\Ba \Ba^T)$, we obtain
\begin{align}
s \ &= \ \left( c \ - \ \Bb^T\BA^{-1} \Bb \right)^{-1}
\ = \ \EXP_{p(\Bz)}^{-1} ( \delta^2 ) \
\left( 1 \ - \ \EXP_{q(\Bz)}^T(\Ba)  \
\EXP_{q(\Bz)}^{-1} (\Ba \Ba^T) \ \EXP_{q(\Bz)} (\Ba) \right)^{-1}  \ .
\end{align}
Applying Lemma~\ref{th:lemma2} in the appendix gives the formula for $s$.
\end{proof}

The {\em bias shift} (mean shift) of unit $i$ is the change of unit $i$'s
mean value due to the weight update. Bias shifts of unit $i$
lead to oscillations and impede learning. See Section~4.4 in \citet{LeCun:98} for
demonstrating this effect at the inputs and in \citet{LeCun:91} for explaining
this effect using the input covariance matrix.
Such bias shifts are mitigated or even prevented
by the unit natural gradient.
The {\em bias shift correction} of the unit natural gradient is the
effect on the bias shift due to $\Bb$ which captures
the interaction between the bias unit and the incoming units.
Without bias shift correction, i.e., $\Bb=\BZe$ and $s=c^{-1}$,
the weight updates are $\Delta \Bw=\BA^{-1}\Bg$ and $\Delta w_0 = c^{-1} g_0$.
As only the activations
depend on the input, the bias shift can be computed by multiplying the weight
update by the mean of the activation vector $\Ba$.
Thus we obtain the bias shift
$(\EXP_{p(\Bz)} (\Ba)^T ,1) (\Delta \Bw^T, \Delta w_0)^T=
\EXP_{p(\Bz)}^T (\Ba) \BA^{-1}\Bg + c^{-1} g_0$.
The bias shift strongly depends on the correlation of the incoming
units which is captured by $\BA^{-1}$.

Next, Theorem~\ref{th:th2} states that the bias shift correction by the
unit natural gradient can be
considered to correct the incoming mean $\EXP_{p(\Bz)} (\Ba)$ proportional
to $\EXP_{q(\Bz)} (\Ba)$ toward zero.
\begin{theorem}
\label{th:th2}
The bias shift correction by the unit natural gradient is equivalent to
an additive correction of the incoming mean by
$- k \ \EXP_{q(\Bz)} (\Ba)$
and a multiplicative correction of the bias unit by $k$, where
\begin{align}
\label{eq:k}
k \ &= \ 1 \ + \  \left( \EXP_{q(\Bz)}(\Ba) \ - \ \EXP_{p(\Bz)}(\Ba) \right)^T
\VAR_{q(\Bz)}^{-1}(\Ba) \ \EXP_{q(\Bz)}(\Ba) \ .
\end{align}
\end{theorem}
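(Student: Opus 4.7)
The plan is to expand the bias shift produced by the natural-gradient update of Theorem~\ref{th:th1} and match it, term by term, against the bias shift that the \emph{uncorrected} update $\Delta\Bw=\BA^{-1}\Bg$, $\Delta w_0 = c^{-1}g_0$ would produce if the incoming activations were replaced by a fictitious vector with mean $\EXP_{p(\Bz)}(\Ba) - k\,\EXP_{q(\Bz)}(\Ba)$ and with bias-unit value $k$ in place of $1$. The theorem is then proved by solving for $k$ and rewriting the answer in the stated form.

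Concretely, I would evaluate the bias shift $\EXP_{p(\Bz)}^T(\Ba)\,\Delta\Bw + \Delta w_0$ using the natural-gradient update from Theorem~\ref{th:th1} together with the identity $\Bb = c\,\EXP_{q(\Bz)}(\Ba)$, collecting everything into the form $\EXP_{p(\Bz)}^T(\Ba)\BA^{-1}\Bg + \alpha\,\bigl(c^{-1}g_0 - \EXP_{q(\Bz)}^T(\Ba)\BA^{-1}\Bg\bigr)$ with
\begin{align}
\alpha \ = \ sc\,\bigl(1 \ - \ c\,\EXP_{p(\Bz)}^T(\Ba)\,\BA^{-1}\,\EXP_{q(\Bz)}(\Ba)\bigr) .
\end{align}
The same procedure applied to the uncorrected update on the modified activations produces exactly the same functional form but with $\alpha$ replaced by $k$. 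The equivalence the theorem asserts therefore pins down $k = \alpha$.

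To bring this $k$ into the stated form~(\ref{eq:k}), I would write $\BA = c\bigl(\VAR_{q(\Bz)}(\Ba) + \EXP_{q(\Bz)}(\Ba)\EXP_{q(\Bz)}^T(\Ba)\bigr)$ and apply the Sherman--Morrison identity to express $c\,\EXP_{p(\Bz)}^T(\Ba)\,\BA^{-1}\,\EXP_{q(\Bz)}(\Ba)$ in terms of $\VAR_{q(\Bz)}^{-1}(\Ba)$. The denominator produced by Sherman--Morrison is exactly the reciprocal of $sc = 1 + \EXP_{q(\Bz)}^T(\Ba)\VAR_{q(\Bz)}^{-1}(\Ba)\EXP_{q(\Bz)}(\Ba)$, which Theorem~\ref{th:th1} already supplies. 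That cancellation reduces $k = sc - \EXP_{p(\Bz)}^T(\Ba)\,\VAR_{q(\Bz)}^{-1}(\Ba)\,\EXP_{q(\Bz)}(\Ba)$, which then regroups as $1 + \bigl(\EXP_{q(\Bz)}(\Ba) - \EXP_{p(\Bz)}(\Ba)\bigr)^T \VAR_{q(\Bz)}^{-1}(\Ba)\,\EXP_{q(\Bz)}(\Ba)$, as desired.

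The only delicate step is the Sherman--Morrison manipulation---specifically, recognizing that the rank-one correction produces precisely the reciprocal of the $sc$ prefactor, so that the product collapses into an additive expression rather than an unwieldy ratio. Everything else is routine bookkeeping in the two expansions of the bias shift.
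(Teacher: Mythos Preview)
Your proposal is correct and follows essentially the same route as the paper: both expand the bias shift $\EXP_{p(\Bz)}^T(\Ba)\,\Delta\Bw + \Delta w_0$ using the update from Theorem~\ref{th:th1}, isolate the factor $s\bigl(1 - \EXP_{p(\Bz)}^T(\Ba)\,\BA^{-1}\Bb\bigr)$ (your $\alpha/c$) multiplying $\Bb$ and $g_0$, and then reduce it to the stated form of $k$ via Sherman--Morrison (which the paper packages as Lemma~\ref{th:lemma2}). The only cosmetic difference is that you make the ``equivalence'' explicit by writing out the bias shift of the uncorrected update on the modified activations and matching coefficients, whereas the paper simply reads off the mean- and bias-correction terms by inspection.
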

\begin{proof}
Using
$\Delta w_0 = - s \Bb^T \BA^{-1} \Bg + s g_0$, the bias shift is:
\begin{align}
&\begin{pmatrix}
\EXP_{p(\Bz)} (\Ba) \\
1
\end{pmatrix}^T \
\begin{pmatrix}
\Delta \Bw \\
\Delta w_0
\end{pmatrix}
\ = \
\begin{pmatrix}
\EXP_{p(\Bz)} (\Ba) \\
1
\end{pmatrix}^T \
\begin{pmatrix}
\BA^{-1} \ \Bg \ - \  \BA^{-1} \Bb  \ \Delta w_0\\
\Delta w_0
\end{pmatrix}  \\ \nonumber
&= \ \EXP_{p(\Bz)}^T (\Ba) \ \BA^{-1} \ \Bg \ + \
\left( 1 \ - \ \EXP_{p(\Bz)}^T (\Ba) \ \BA^{-1} \Bb \right) \
\Delta w_0  \\ \nonumber
&= \ \left( \EXP_{p(\Bz)}^T (\Ba) \ - \
\underbrace{\left( 1 \ - \ \EXP_{p(\Bz)}^T (\Ba) \ \BA^{-1} \Bb \right) \ s \
\Bb^T } \right) \BA^{-1} \ \Bg \ + \  s \ \left( 1 \ - \
\EXP_{p(\Bz)}^T (\Ba) \ \BA^{-1} \Bb \right) \ g_0 \ .
\end{align}

The mean correction term, indicated by an underbrace in previous
formula, is
\begin{align}
&s \ \left( 1 \ - \ \EXP_{p(\Bz)}^T (\Ba) \ \BA^{-1} \Bb \right) \ \Bb \
=  \ \EXP_{p(\Bz)}^{-1} ( \delta^2 ) \ \left( 1 \ - \ \EXP_{q(\Bz)}^T
(\Ba)  \ \EXP_{q(\Bz)}^{-1} (\Ba \Ba^T) \ \EXP_{q(\Bz)}
(\Ba) \right)^{-1}\\\nonumber
&~~~~~~\left( 1 \ - \ \EXP_{p(\Bz)}^T (\Ba) \
\EXP_{q(\Bz)}^{-1} (\Ba \Ba^T)  \ \EXP_{q(\Bz)} (\Ba) \right) \
\EXP_{p(\Bz)} ( \delta^2 ) \ \EXP_{q(\Bz)} (\Ba) \\\nonumber
&= \ \underbrace{\left( 1  -  \EXP_{q(\Bz)}^T(\Ba)  \
\EXP_{q(\Bz)}^{-1} (\Ba \Ba^T) \ \EXP_{q(\Bz)}(\Ba) \right)^{-1}
\left( 1  -  \EXP_{p(\Bz)}^T (\Ba) \ \EXP_{q(\Bz)}^{-1} (\Ba \Ba^T)
\ \EXP_{q(\Bz)}(\Ba) \right)}_{k}  \EXP_{q(\Bz)}(\Ba) .
\end{align}
The expression Eq.~\eqref{eq:k} for $k$ follows from
Lemma~\ref{th:lemma2} in the appendix.
The bias unit correction term is
$s \left( 1 - \EXP_{p(\Bz)}^T (\Ba) \BA^{-1} \Bb \right) g_0 =
k c^{-1} g_0$.
\end{proof}

In Theorem~\ref{th:th2} we can reformulate
$k =  1  +   \EXP_{p(\Bz)}^{-1} ( \delta^2 ) \COV_{p(\Bz)}^T (  \delta^2 , \Ba )
\VAR_{q(\Bz)}^{-1}(\Ba) \EXP_{q(\Bz)}(\Ba)$. Therefore $k$ increases with the length of
$\EXP_{q(\Bz)}(\Ba)$ for given variances and covariances.
Consequently the bias shift correction through the
unit natural gradient is governed by the length of $\EXP_{q(\Bz)}(\Ba)$.
The bias shift correction is zero for $\EXP_{q(\Bz)} (\Ba)=\BZe$ since
$k=1$ does not correct the bias unit multiplicatively.
Using Eq.~\eqref{eq:b}, $\EXP_{q(\Bz)}(\Ba)$ is split into an
offset and an information containing term:
\begin{align}
\EXP_{q(\Bz)} (\Ba) \ =  \ \EXP_{p(\Bz)} (\Ba) \ + \
\EXP_{p(\Bz)}^{-1} ( \delta^2 ) \
\COV_{p(\Bz)} ( \delta^2 , \Ba ) \ .
\end{align}
In general, {\em smaller positive $\EXP_{p(\Bz)} (\Ba)$ lead to smaller
positive $\EXP_{q(\Bz)}(\Ba)$, therefore to smaller corrections.}
The reason is that in general the largest absolute components
of $\COV_{p(\Bz)} (\delta^2, \Ba )$ are positive,
since activated inputs will activate the unit $i$ which in turn will have
large impact on the output.

To summarize,
the unit natural gradient corrects the bias shift of unit $i$
via the interactions of incoming units with the bias unit
to ensure efficient learning.
This correction is equivalent to shifting the mean activations
of the incoming units toward zero and scaling up the bias unit.
To reduce the undesired bias shift effect without the natural gradient,
either the
(i) activation of incoming units can be
centered at zero
or (ii) activation functions with negative values can be used.
We introduce a new activation function with negative values while
keeping the identity for positive arguments where it is not contradicting.

\section{Exponential Linear Units (ELUs)}
\label{sec:elu}


The {\em exponential linear unit} (ELU) with $0<\alpha$ is
\begin{align}
\label{eq:elu}
&f(x) \ = \
\begin{cases}
x &\mbox{if } x > 0 \\
\alpha \ (\exp(x)-1) & \mbox{if } x \leq 0
\end{cases} \quad ,  \quad
f'(x) \ = \
\begin{cases}
1 &\mbox{if } x > 0 \\
f(x) + \alpha & \mbox{if } x \leq 0
\end{cases} \ .
\end{align}
The ELU hyperparameter $\alpha$ controls the value to
which an ELU saturates for negative net inputs (see Fig.~\ref{fig:ActivationFunction}).
ELUs diminish the vanishing gradient effect
as rectified linear units (ReLUs) and leaky ReLUs
(LReLUs) do.
The vanishing gradient problem is alleviated because
the positive part of these functions is the identity,
therefore their derivative is one and not contractive.
In contrast, $\tanh$ and sigmoid activation functions are contractive
almost everywhere.

\begin{wrapfigure}{r}{0.45\textwidth}
\vspace*{-10pt}
\begin{center}
\includegraphics[width=0.45\textwidth]{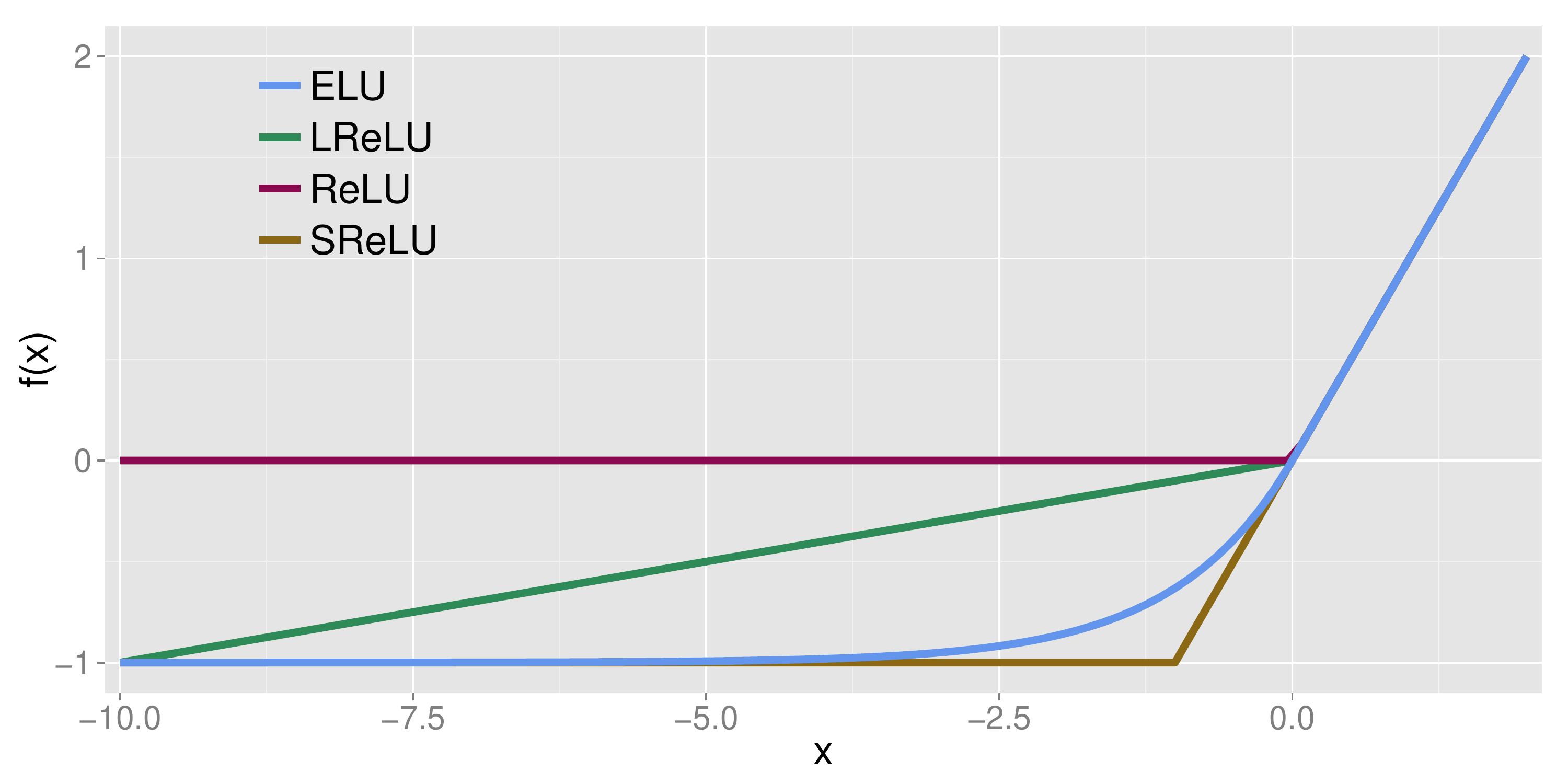}
\end{center}
\caption{The rectified linear unit (ReLU), the leaky ReLU (LReLU, $\alpha=0.1$), the shifted ReLUs (SReLUs),
and the exponential linear unit (ELU, $\alpha=1.0$). \label{fig:ActivationFunction}}
\vspace*{-5pt}
\end{wrapfigure}
In contrast to ReLUs, ELUs have negative values which
pushes the mean of the activations closer to zero.
Mean activations that are closer to zero enable faster learning as
they bring the gradient closer to the natural gradient (see
Theorem~\ref{th:th2} and text thereafter).
ELUs saturate to a negative value when the argument gets smaller.
Saturation means a small derivative
which decreases the variation and the information that is
propagated to the next layer. Therefore the representation
is both noise-robust and low-complex \citep{Hochreiter:99nc}.
ELUs code the degree of presence of input concepts,
while they neither quantify the degree of their absence nor distinguish
the causes of their absence.
This property of non-informative deactivation states is also present at ReLUs
and allowed to detect biclusters corresponding to
biological modules in gene expression datasets \citep{Clevert:15nips} and
to identify toxicophores in toxicity prediction \citep{Unterthiner:15,Mayr:15}.
The enabling features for these interpretations
is that activation can be clearly distinguished from deactivation
and that only active units carry relevant information and can crosstalk.

\section{Experiments Using ELUs}
\label{sec:exp}
\begin{figure}[!ht]
\begin{center}
\subfigure[Average unit activation]{
\includegraphics[angle=0,width= 0.49\textwidth]{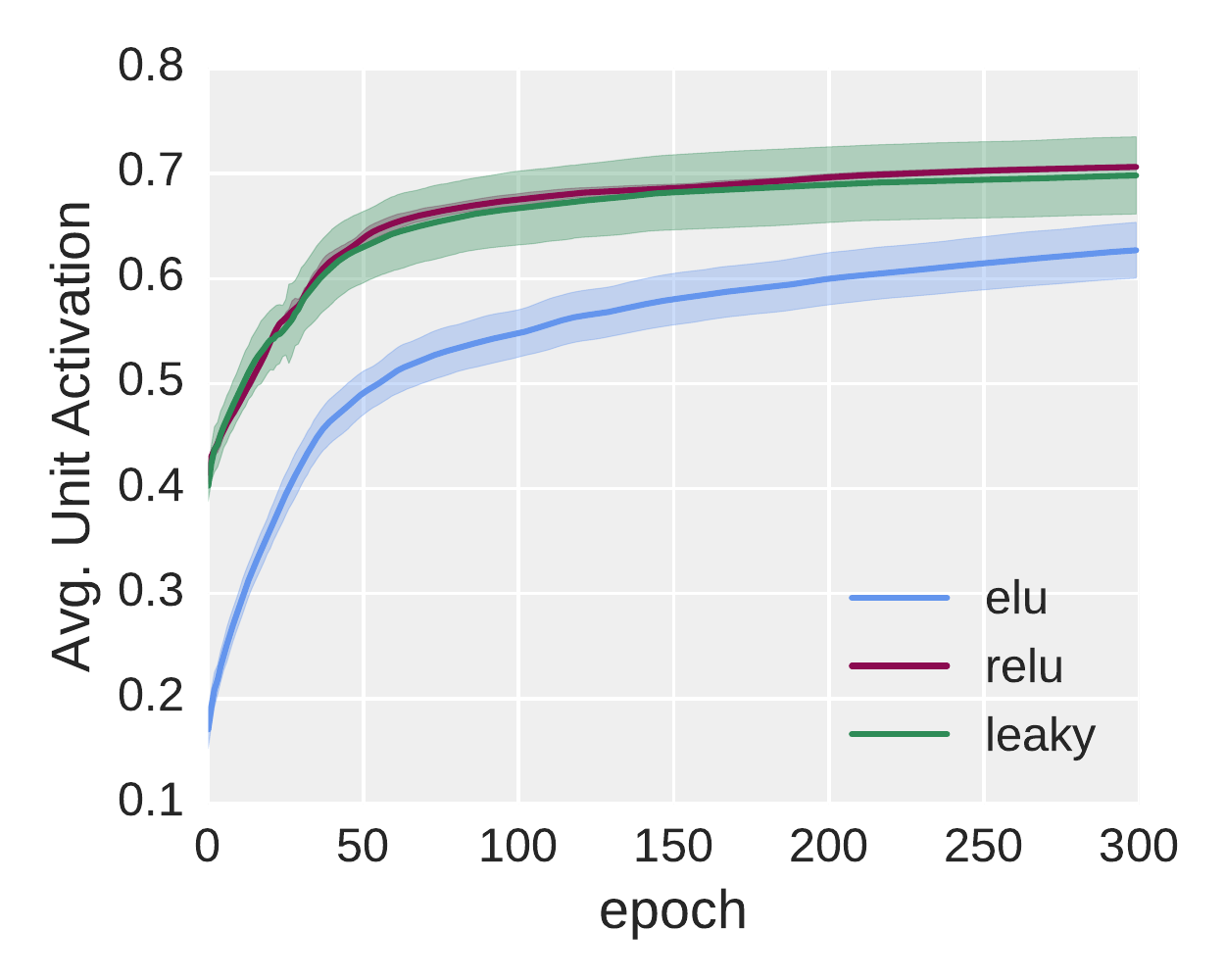}}
\subfigure[Cross entropy loss]{
\includegraphics[angle=0,width= 0.49\textwidth]{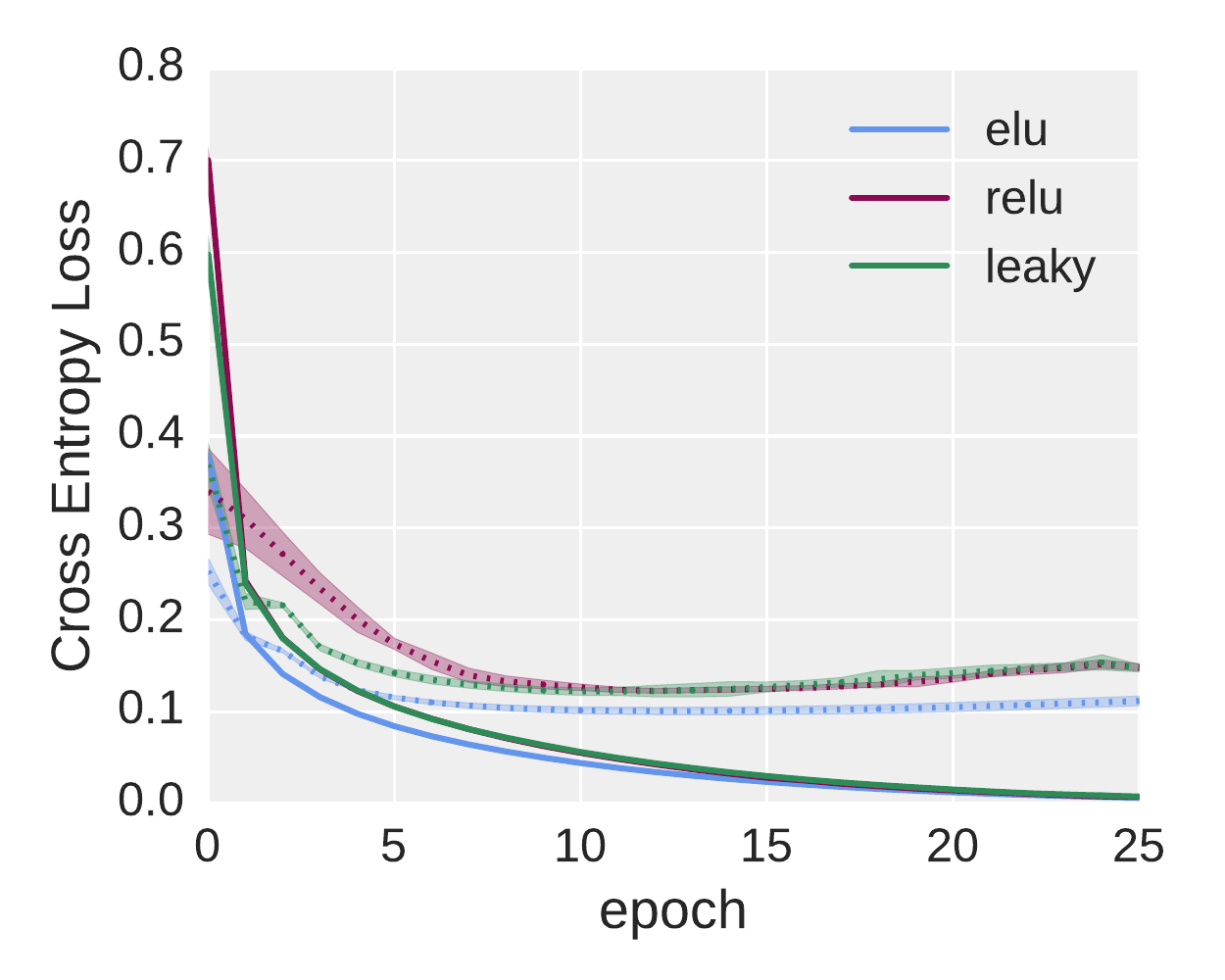}}
\end{center}
\caption{ELU networks evaluated at MNIST. Lines are
the average over five runs with different random initializations, error
bars show standard deviation.
Panel (a): median of the average unit activation for different
activation functions.
Panel (b): Training set (straight line) and validation set (dotted line)
 cross entropy loss. All lines stay flat after epoch 25.
\label{fig:mnistplots}}
\end{figure}

In this section, we assess the performance of exponential linear units (ELUs)
if used for unsupervised and supervised learning of deep autoencoders and
deep convolutional networks. ELUs with $\alpha=1.0$  are compared to
(i) Rectified Linear Units (ReLUs) with activation $f(x)=\max(0,x)$,
(ii) Leaky ReLUs (LReLUs) with activation $f(x)=\max(\alpha x,x)$ ($0<\alpha<1$), and
(iii) Shifted ReLUs (SReLUs) with activation $f(x)=\max(-1,x)$.
Comparisons are done with and without batch normalization.
The following benchmark datasets are used:
(i) {\em MNIST} (gray images in 10 classes, 60k train and 10k test),
(ii) {\em CIFAR-10} (color images in 10 classes, 50k train and 10k test),
(iii) {\em CIFAR-100} (color images in 100 classes, 50k train and 10k test), and
(iv) {\em ImageNet} (color images in 1,000 classes, 1.3M train and 100k tests).

\subsection{MNIST}

\subsubsection{Learning Behavior}
We first want to verify that ELUs keep the mean activations closer to
zero than other units. Fully connected deep neural networks
with ELUs ($\alpha=1.0$), ReLUs, and LReLUs ($\alpha=0.1$) were
trained on the MNIST digit classification
dataset while each hidden unit's activation was tracked.
Each network had eight hidden layers of 128 units each, and was trained
for 300 epochs by
stochastic gradient descent with
learning rate $0.01$ and mini-batches of size 64.
The weights have been initialized according to \citep{He:15}.
After each epoch we calculated the units' average activations on a fixed
subset of the training data.
Fig.~\ref{fig:mnistplots} shows the median over all units along learning.
ELUs stay have smaller median throughout the training process.
The training error of ELU networks decreases much more rapidly than for the other
networks.

Section~\ref{sec:varianceMean} in the appendix compares the variance
of median activation in ReLU and ELU networks.
The median varies much more in ReLU networks.
This indicates that ReLU networks continuously try to correct
the bias shift introduced by previous weight updates
while this effect is much less prominent in ELU networks.

\subsubsection{Autoencoder Learning}

\begin{figure}[!ht]
\begin{center}
\subfigure[Training set]{
\includegraphics[angle=0,width= 0.49\textwidth]{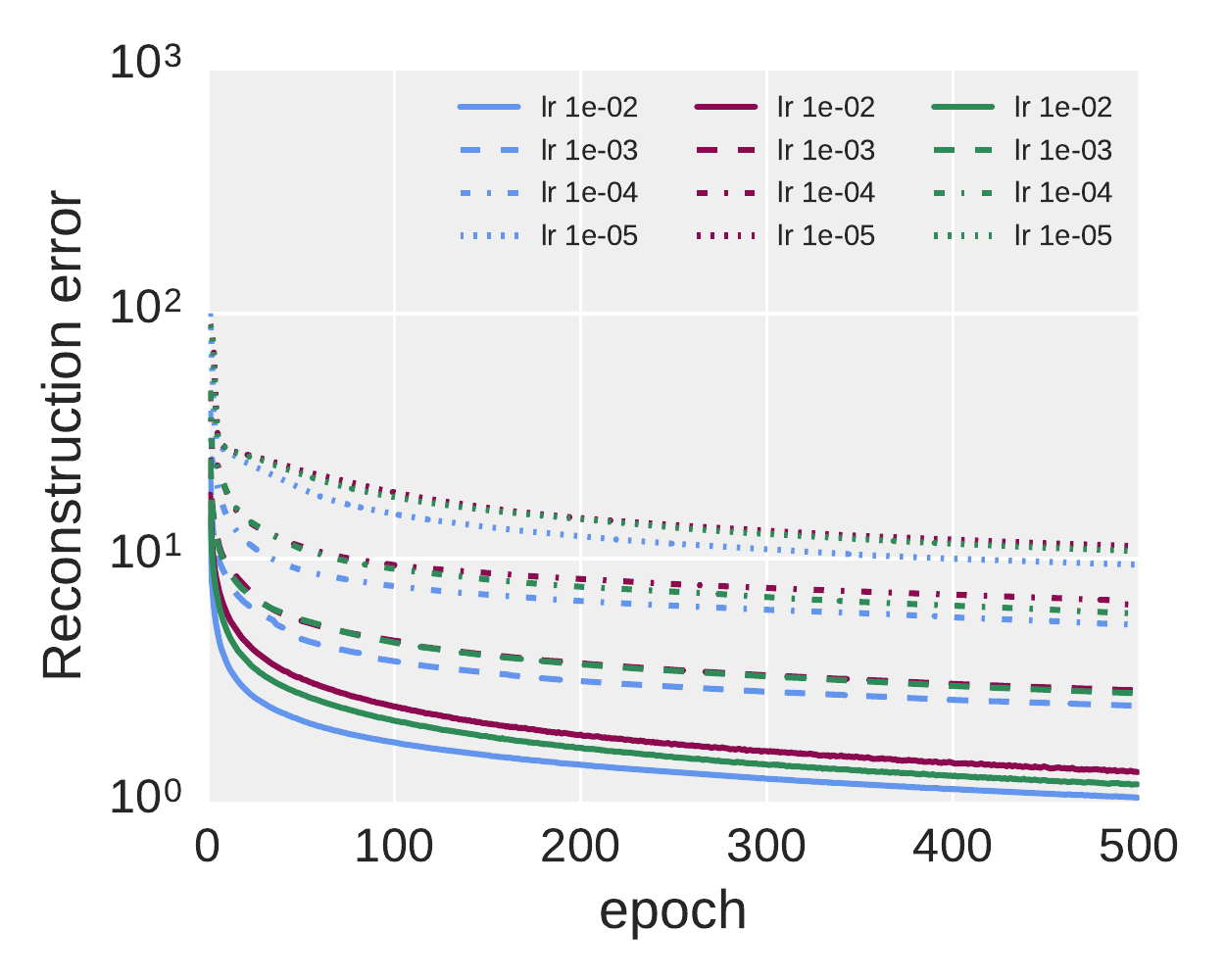}}
\subfigure[Test set]{
\includegraphics[angle=0,width= 0.49\textwidth]{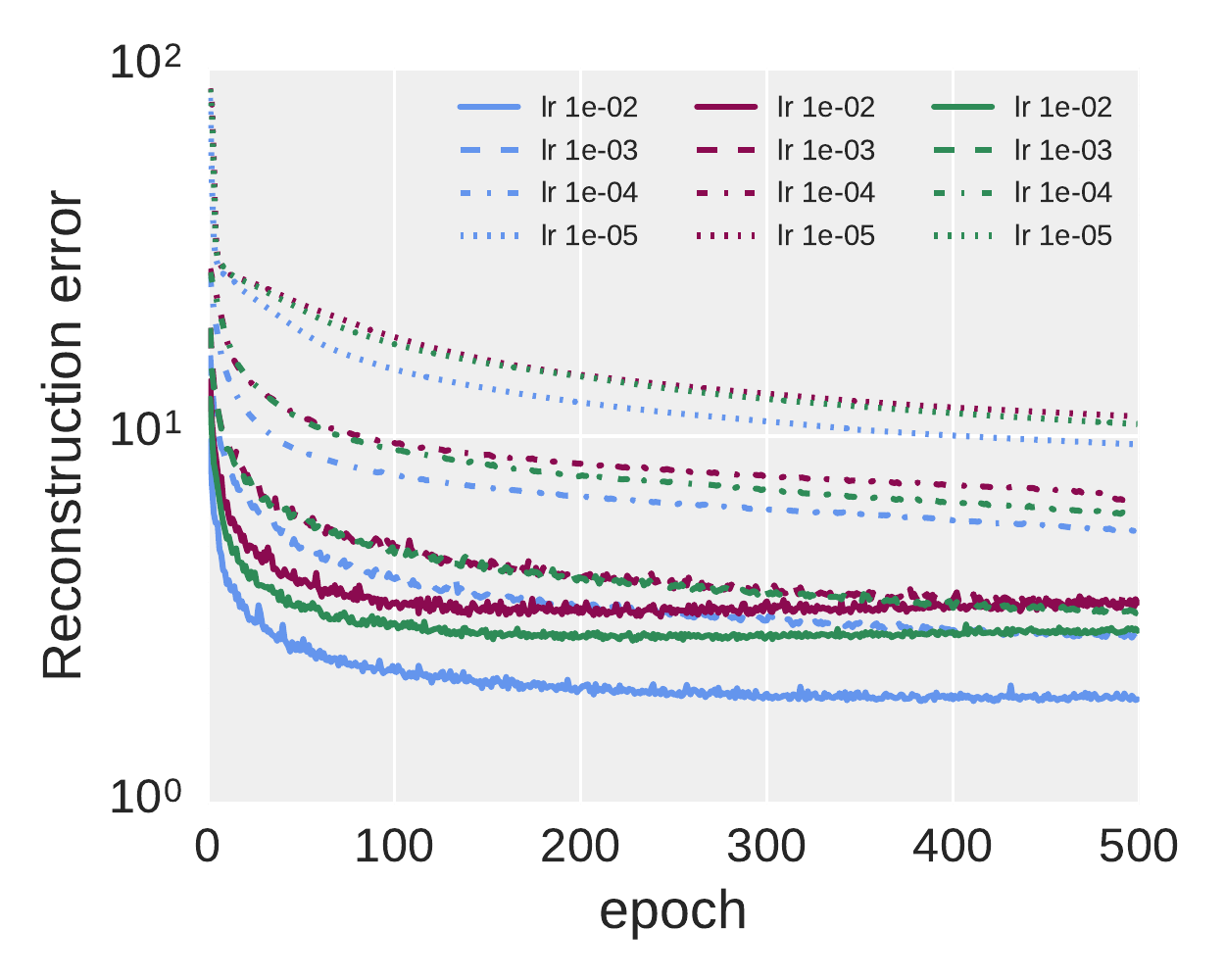}}
\end{center}
\caption{Autoencoder training on MNIST: Reconstruction error for the test and training  data set over
epochs, using different activation functions and learning rates. The results are medians
over several runs with different random initializations.
\label{fig:mnistAuto}}
\end{figure}

To evaluate ELU networks at unsupervised settings, we followed
\citet{Martens:10} and \citet{Desjardins:15} and trained a
deep autoencoder on the MNIST dataset.
The encoder part consisted of four
fully connected hidden layers with sizes 1000, 500, 250 and 30, respectively.
The decoder part was symmetrical to the encoder.
For learning we applied stochastic gradient descent with mini-batches of 64 samples for 500 epochs
using the fixed learning rates ($10^{-2}, 10^{-3}, 10^{-4}, 10^{-5}$).
Fig.~\ref{fig:mnistAuto} shows, that ELUs outperform the competing
activation functions in terms of training / test set reconstruction error for all learning rates.
As already noted by \citet{Desjardins:15}, higher learning rates
seem to perform better.

\subsection{Comparison of Activation Functions}
\label{sec:compFunctions}

In this subsection we show that ELUs indeed possess a superior
learning behavior compared to other activation functions as
postulated in Section~\ref{sec:elu}.
Furthermore we show that ELU networks perform better than ReLU
networks with batch normalization.
We use as benchmark dataset CIFAR-100 and use a relatively
simple convolutional neural network (CNN) architecture to keep the
computational complexity reasonable for comparisons.

\begin{figure*}[!ht]
\begin{center}
\subfigure[Training loss]{
\includegraphics[angle=0,width= 0.32\textwidth]{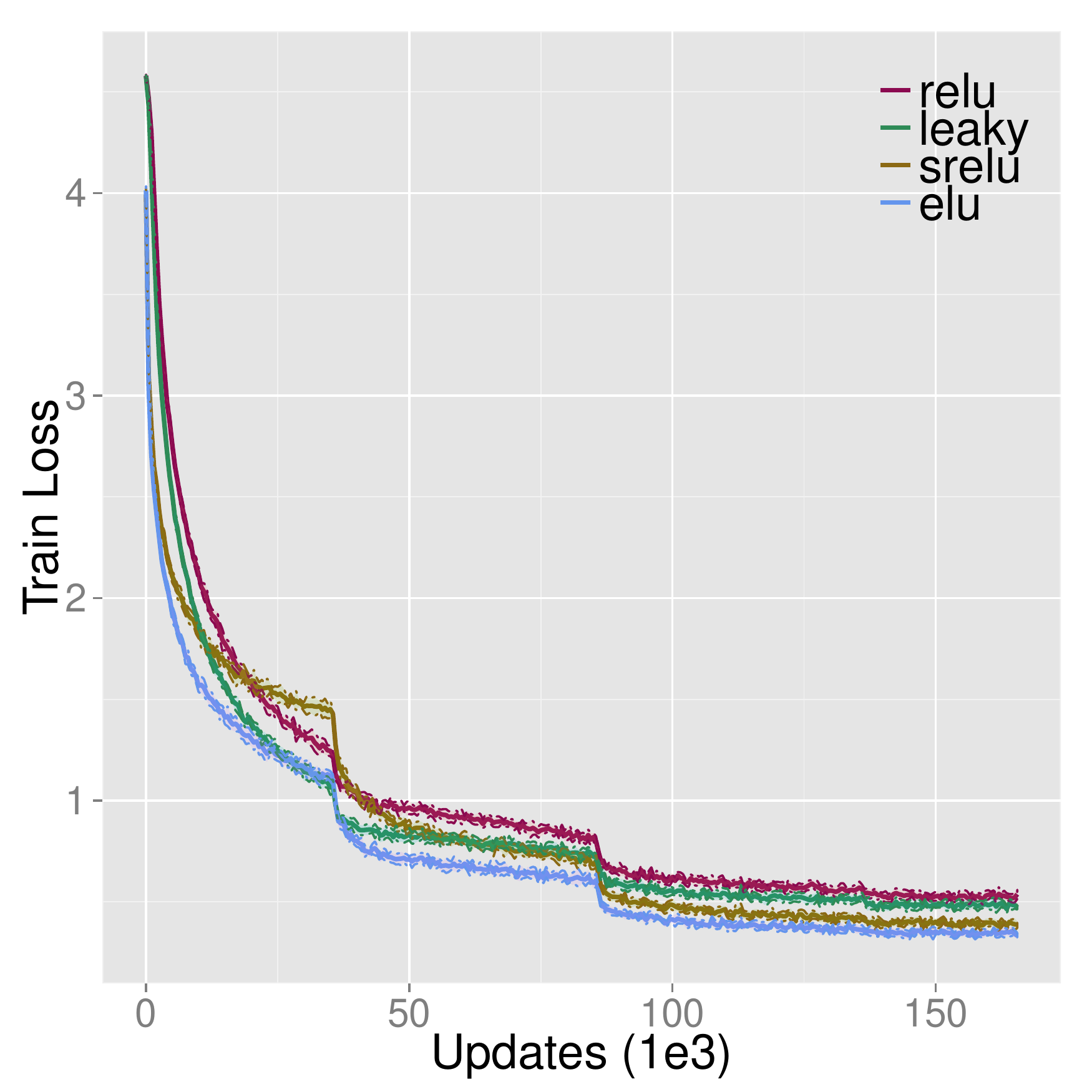}}
\subfigure[Training loss (start)]{
\includegraphics[angle=0,width= 0.32\textwidth]{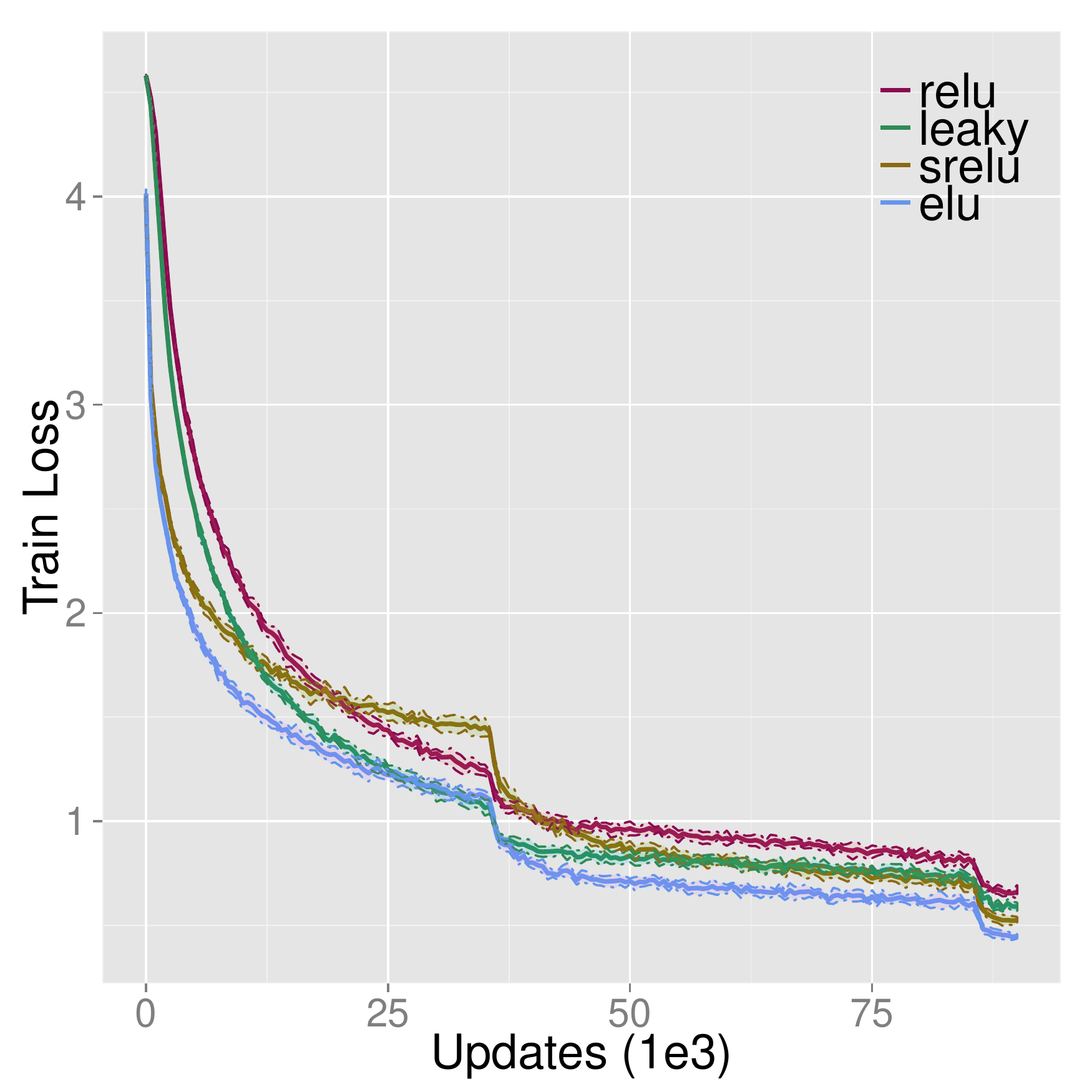}}
\subfigure[Training loss (end)]{
\includegraphics[angle=0,width= 0.32\textwidth]{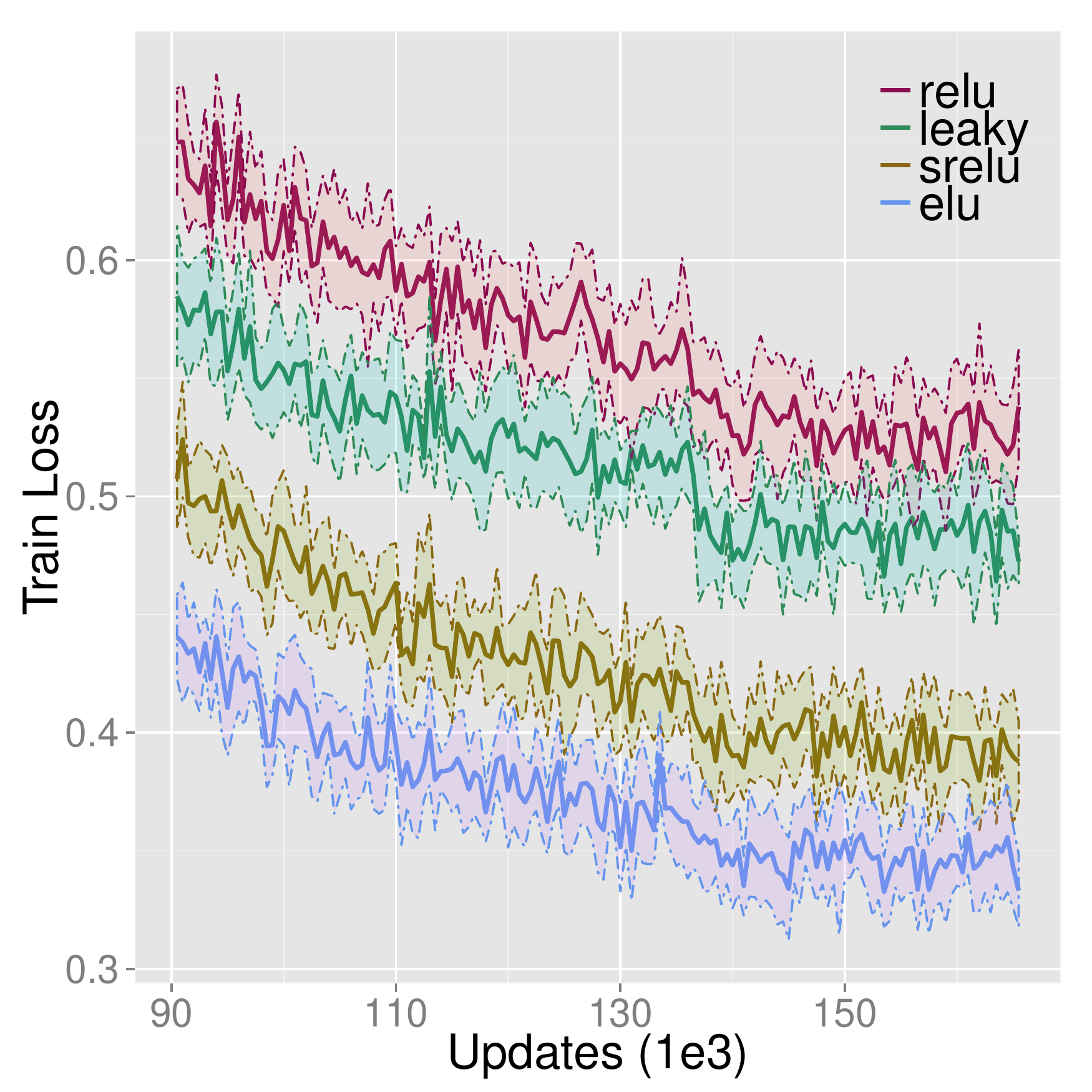}}\\[-2.0ex]
\subfigure[Test error]{
\includegraphics[angle=0,width= 0.32\textwidth]{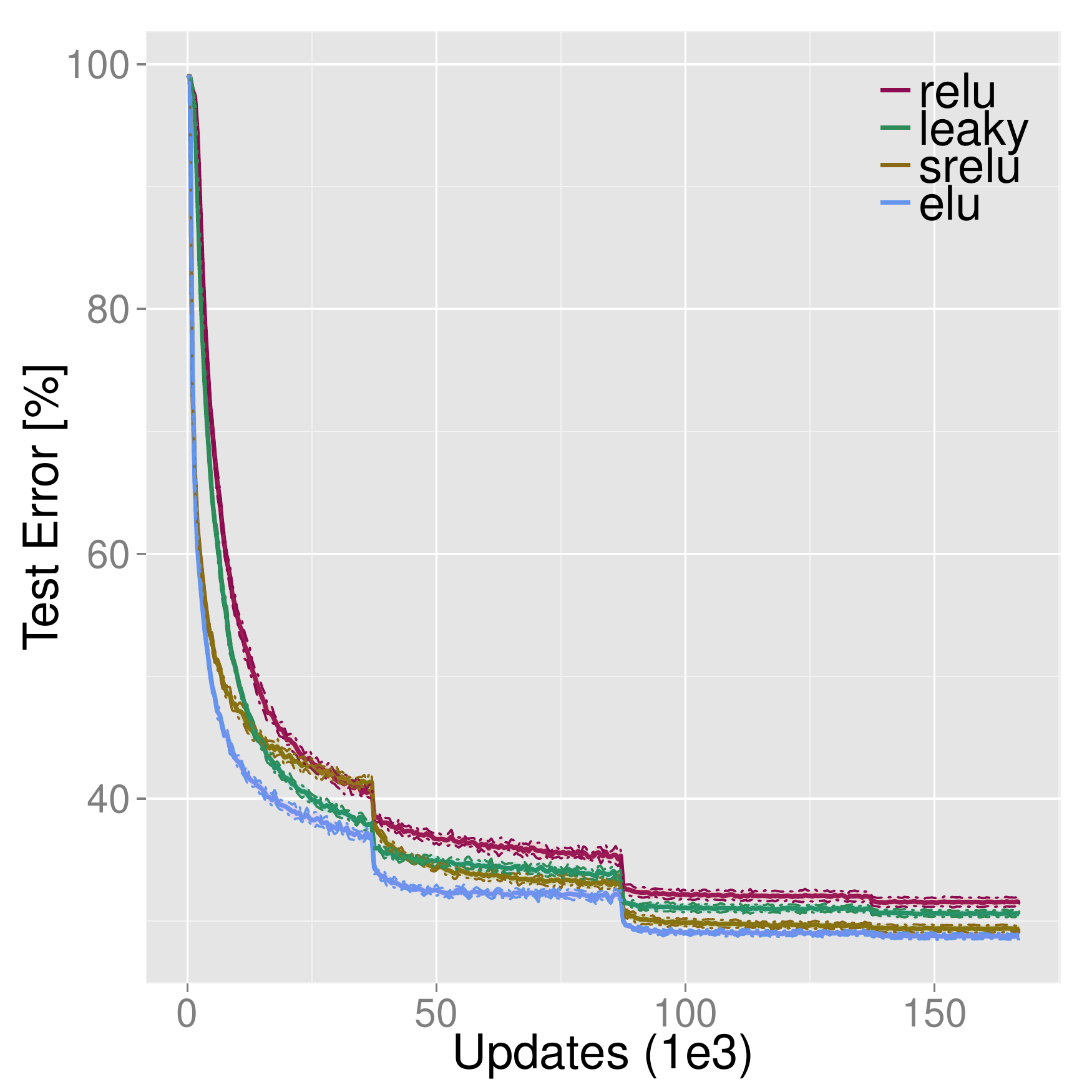}}
\subfigure[Test error (start)]{
\includegraphics[angle=0,width= 0.32\textwidth]{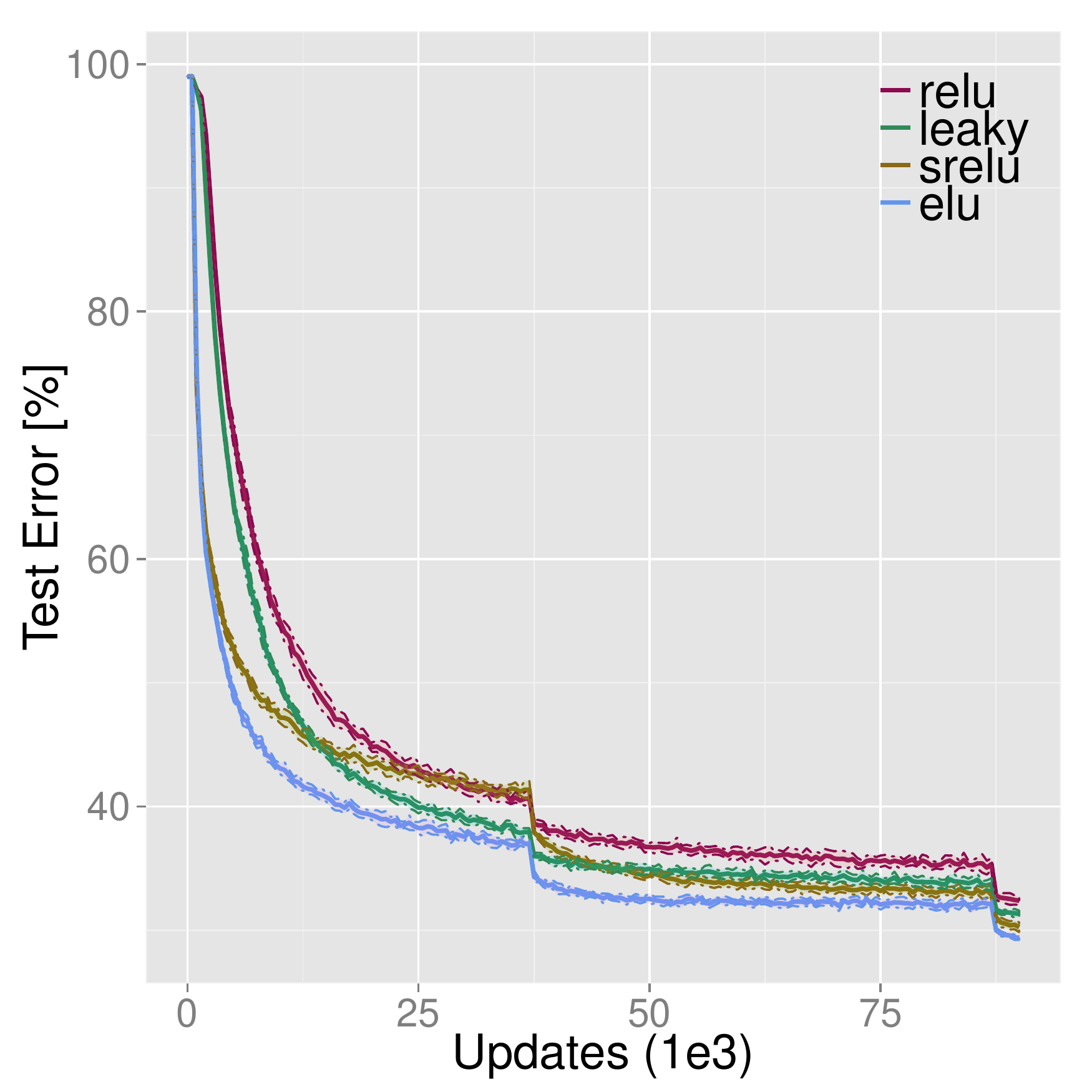}}
\subfigure[Test error (end)]{
\includegraphics[angle=0,width= 0.32\textwidth]{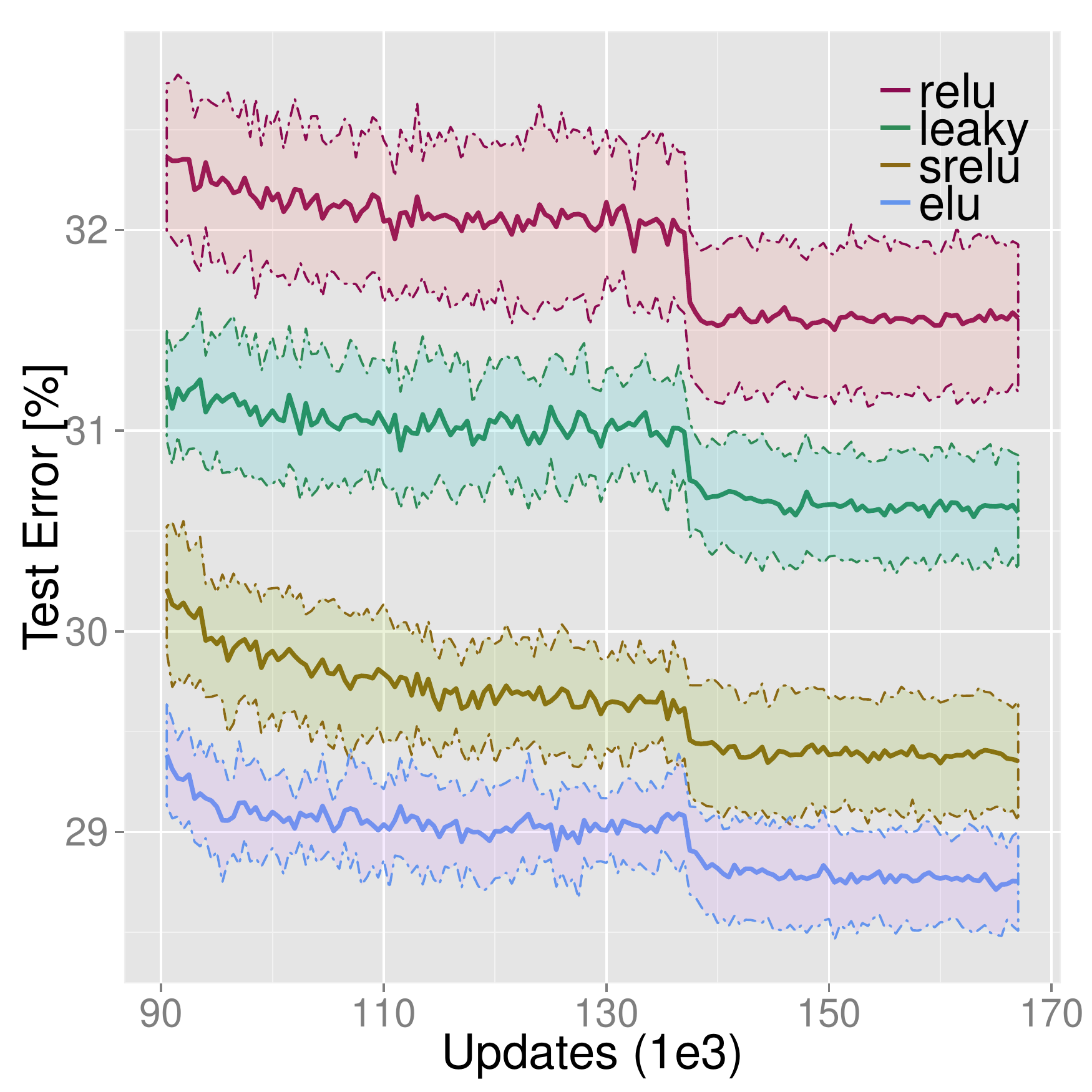}}\\[-2.0ex]
\caption{Comparison of ReLUs, LReLUs, and SReLUs on CIFAR-100.
Panels (a-c) show the training loss, panels (d-f) the test classification
error. The ribbon band show the mean and standard
deviation for 10 runs along the curve. ELU networks achieved lowest test
error and training loss.
\label{fig:resCIFAR100}}
\end{center}
\vspace*{-5pt}
\end{figure*}

\begin{figure*}[!ht]
\begin{center}
\subfigure[ELU - ReLU ]{
\includegraphics[angle=0,width= 0.32\textwidth]{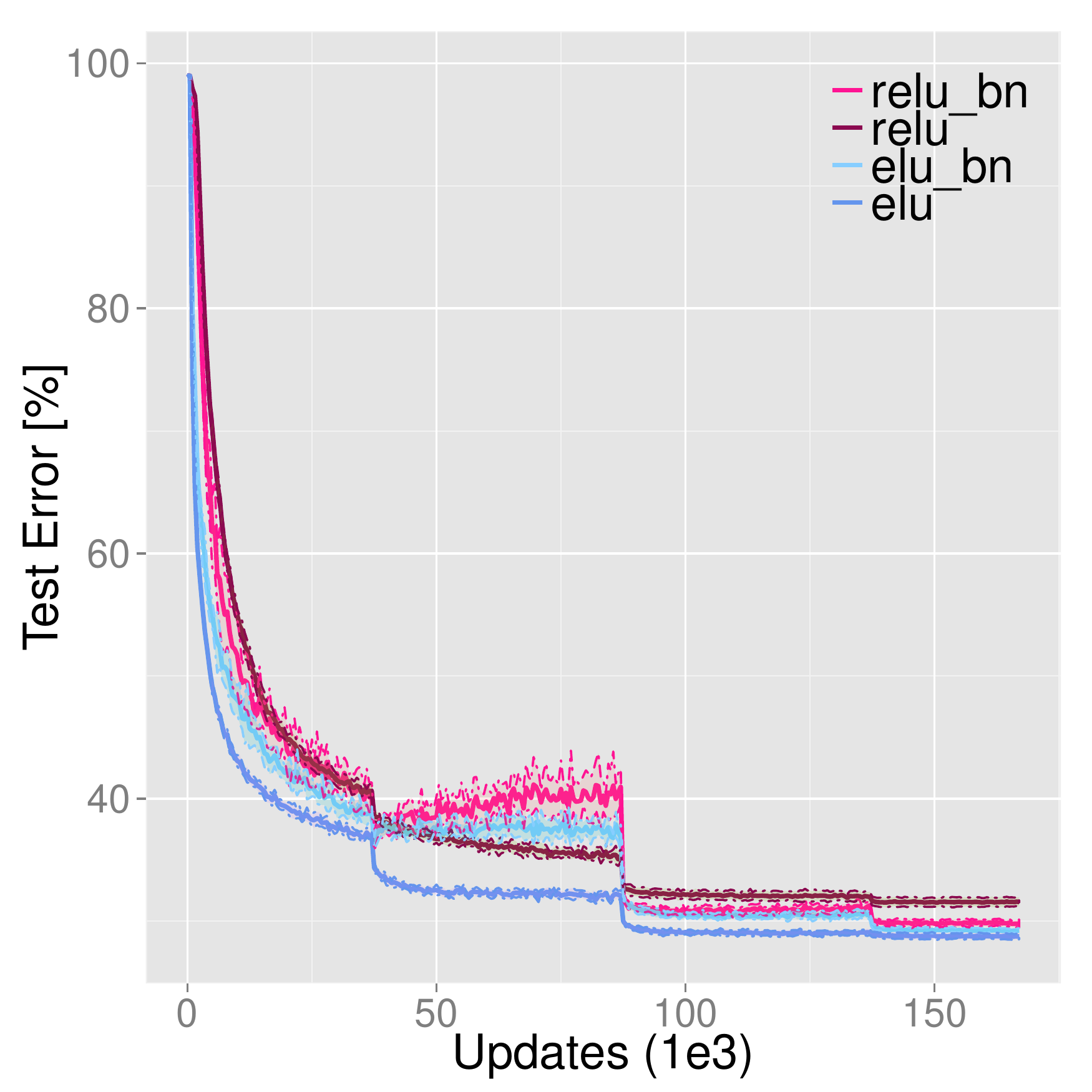}}
\subfigure[ELU - SReLU ]{
\includegraphics[angle=0,width= 0.32\textwidth]{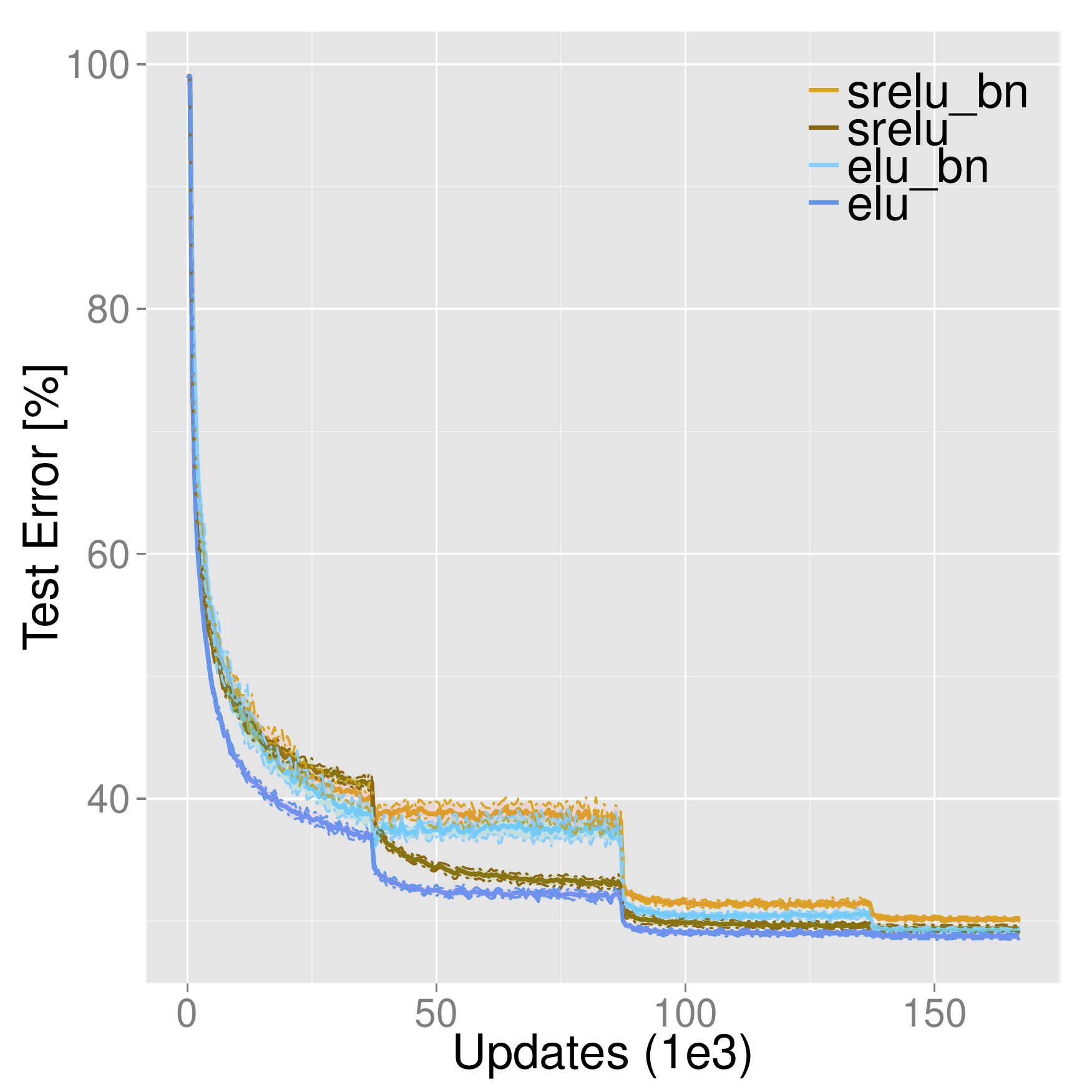}}
\subfigure[ELU - LReLU ]{
\includegraphics[angle=0,width= 0.32\textwidth]{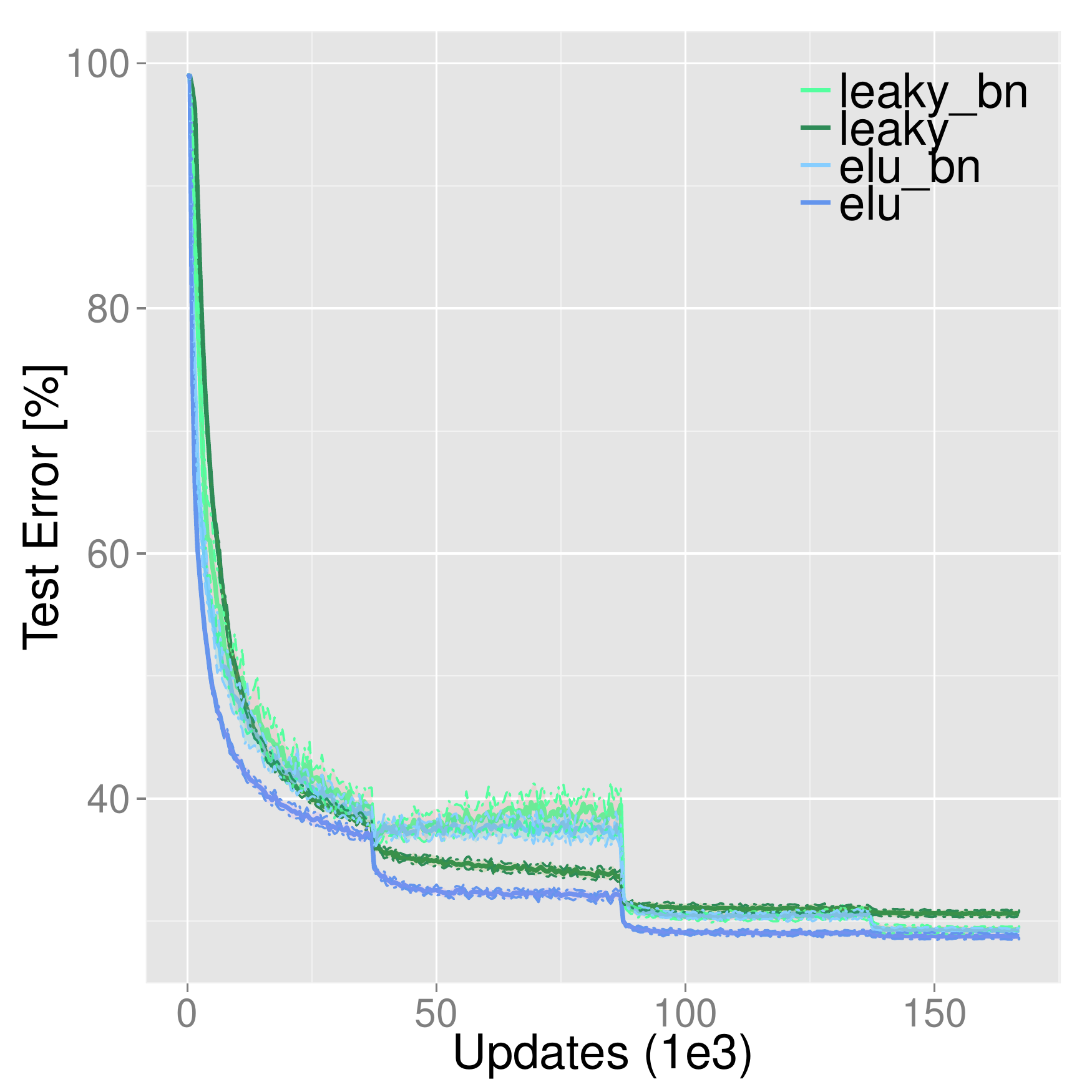}}\\[-2.0ex]
\subfigure[ELU - ReLU (end)]{
\includegraphics[angle=0,width= 0.32\textwidth]{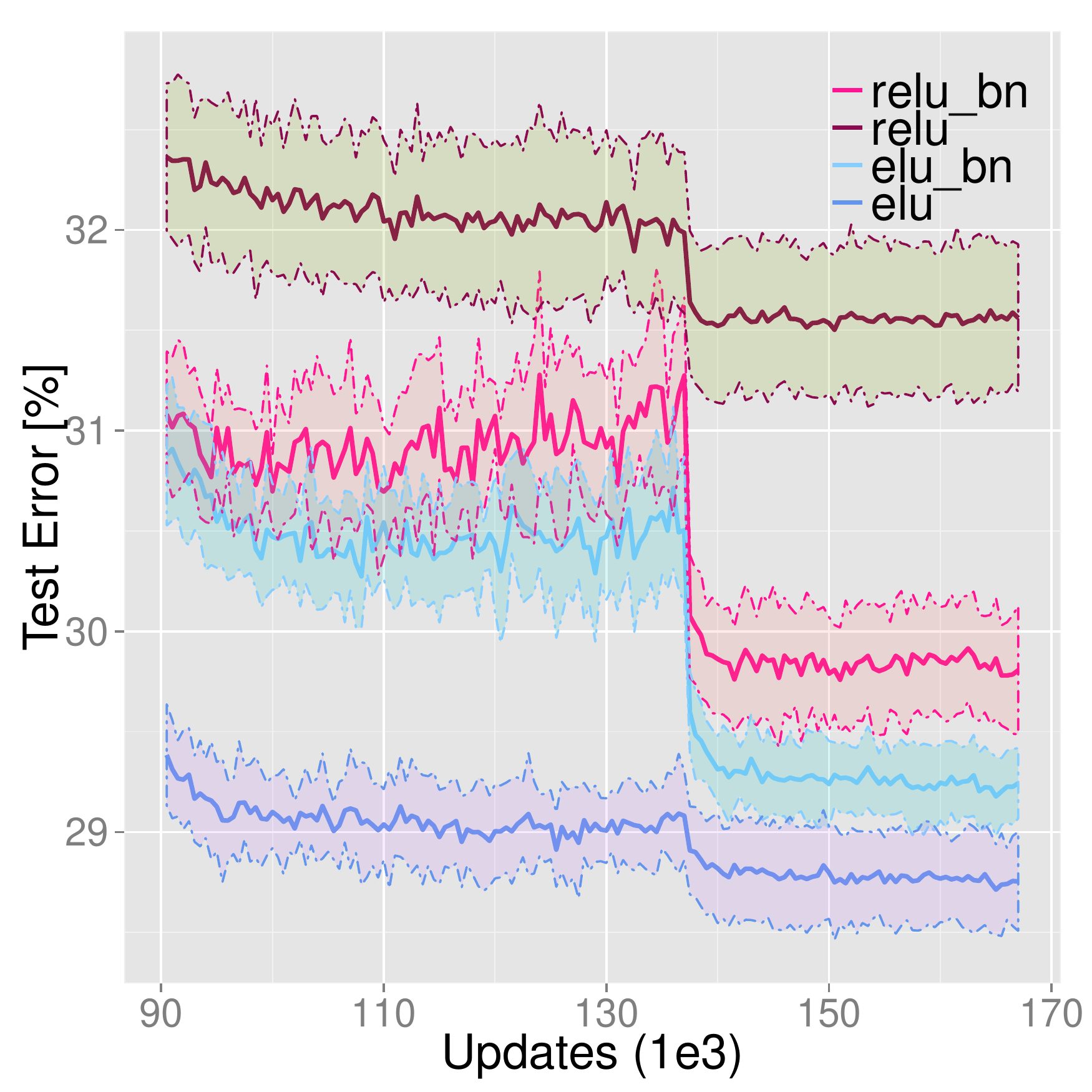}}
\subfigure[ELU - SReLU (end)]{
\includegraphics[angle=0,width= 0.32\textwidth]{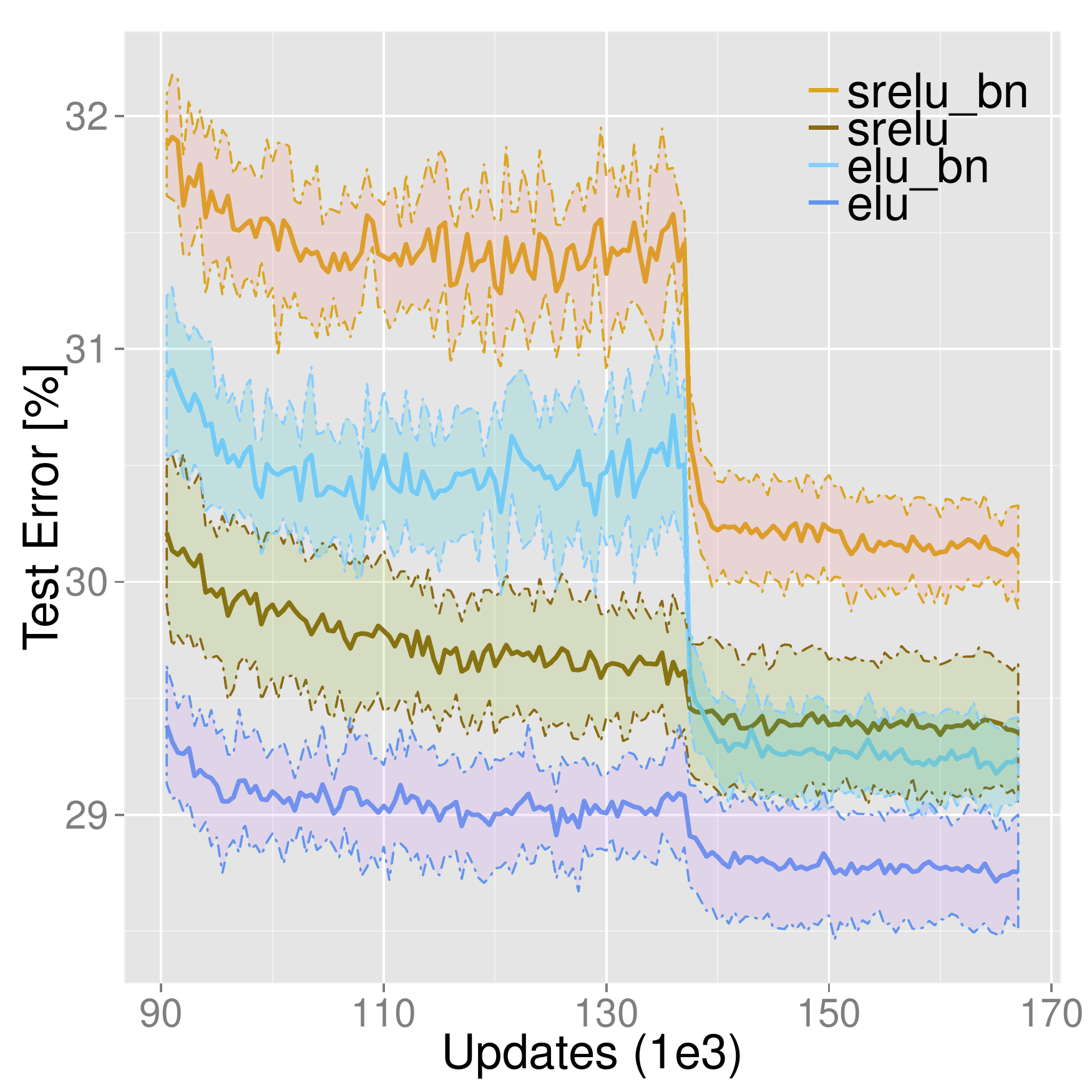}}
\subfigure[ELU - LReLU  (end)]{
\includegraphics[angle=0,width= 0.32\textwidth]{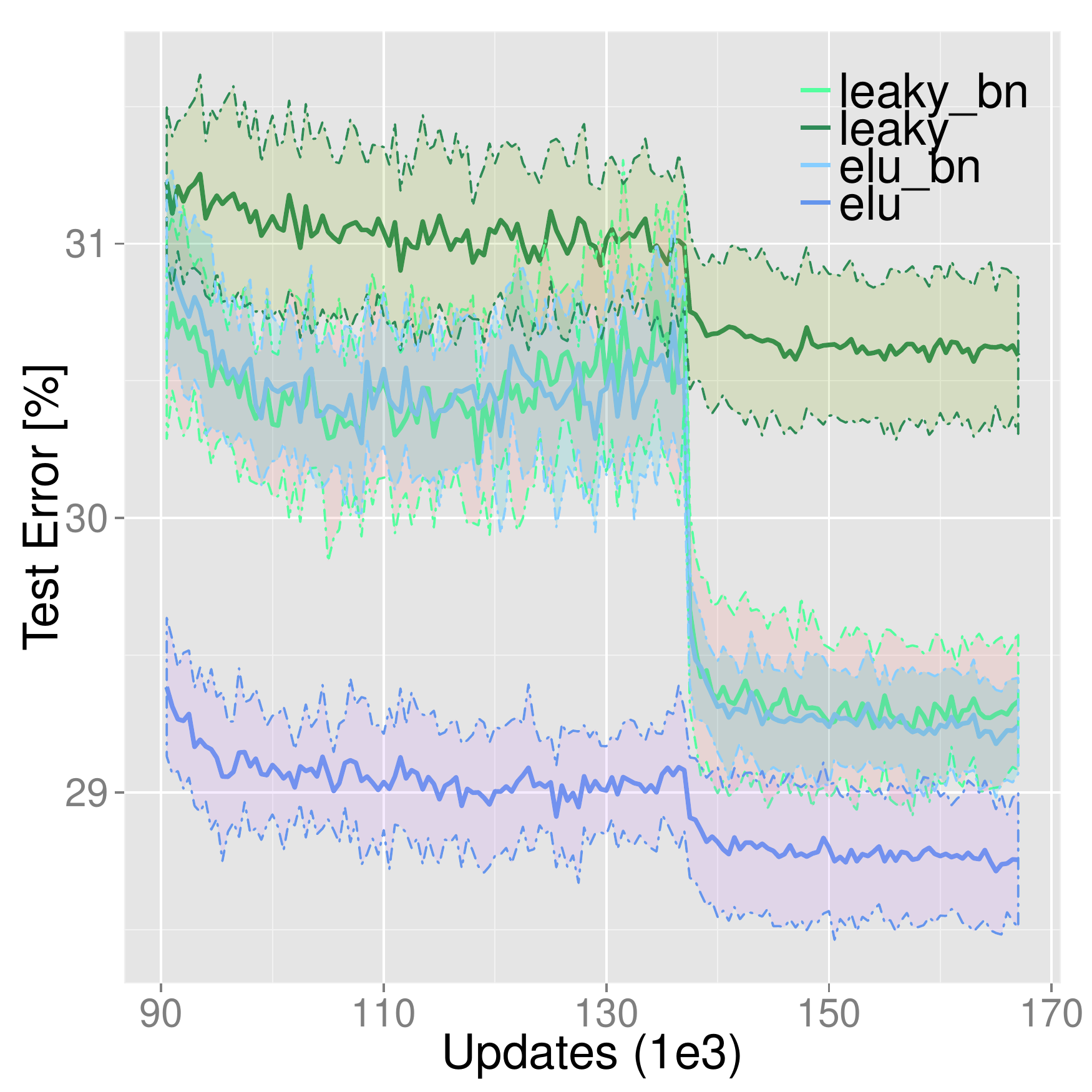}}
\caption{Pairwise comparisons of ELUs with ReLUs, SReLUs, and LReLUs with and without
batch normalization (BN) on CIFAR-100. Panels are described as in
Fig.~\ref{fig:resCIFAR100}.
ELU networks outperform ReLU networks with batch normalization.
\label{fig:resCIFAR100_BN}}
\end{center}
\vspace*{-5pt}
\end{figure*}

The CNN for these CIFAR-100
experiments consists of 11 convolutional layers arranged in stacks of
($[1\times192\times5], [1\times192\times1, 1\times240\times3],
[1\times240\times1, 1\times260\times2], [1\times260\times1,1\times280\times2],
[1\times280\times1,1\times300\times2], [1\times300\times1],
[1\times100\times1]$)
layers $\times$ units $\times$ receptive fields.
2$\times$2 max-pooling with a stride of 2 was applied after each
stack.  For network regularization we used the following drop-out rate
for the last layer of each stack ($0.0, 0.1, 0.2, 0.3, 0.4, 0.5,
0.0$). The $L2$-weight decay regularization term was set to $0.0005$.
The following learning rate schedule was applied ($0-35k [0.01],
35k-85k [0.005], 85k-135k [0.0005], 135k-165k [0.00005]$) (iterations [learning rate]).
For fair comparisons, we used this
learning rate schedule for all networks.
During previous experiments, this schedule was
optimized for ReLU networks, however as ELUs converge faster they
would benefit from an adjusted schedule.
The momentum term learning rate was fixed to 0.9.
The dataset was preprocessed as described in \citet{Goodfellow:13}
with global contrast normalization and ZCA whitening.
Additionally, the images were padded with
four zero pixels at all borders.
The model was trained on $32\times32$ random crops with random
horizontal flipping.
Besides that, we no further augmented the dataset during training.
Each network was run 10 times with different weight initialization.
Across networks with different activation functions the same run number
had the same initial weights.

Mean test error results of networks with different activation functions
are compared in Fig.~\ref{fig:resCIFAR100}, which also shows the standard
deviation.
ELUs yield on average a test error of 28.75($\pm$0.24)\%,
while SReLUs, ReLUs and LReLUs yield 29.35($\pm$0.29)\%,
31.56($\pm$0.37)\% and 30.59($\pm$0.29)\%, respectively.
ELUs achieve both lower training loss and lower test error than ReLUs,
LReLUs, and SReLUs.
Both the ELU training and test performance is significantly better than for other
activation functions (Wilcoxon signed-rank test with $p$-value$<$0.001).
Batch normalization improved ReLU and LReLU networks, but did not improve ELU and SReLU networks (see Fig.~\ref{fig:resCIFAR100_BN}).
ELU networks significantly outperform ReLU networks with batch
normalization (Wilcoxon signed-rank test with $p$-value$<$0.001).

\subsection{Classification Performance on CIFAR-100 and CIFAR-10}
\label{sec:CIFAR100_10}

The following experiments should highlight the generalization
capabilities of ELU networks.
The CNN architecture is more sophisticated than in the previous
subsection and consists of 18 convolutional layers arranged in stacks of
($[1\times384\times3], [1\times384\times1, 1\times384\times2,2\times640\times2],
[1\times640\times1,3\times768\times2], [1\times768\times1,2\times896\times2],
[1\times896\times3,2\times1024\times2], [1\times1024\times1,1\times1152\times2],
[1\times1152\times1],
[1\times100\times1]$).
Initial drop-out rate, Max-pooling after each stack, $L2$-weight
decay, momentum term, data preprocessing, padding, and cropping were as
in previous section.
The initial learning rate was set to 0.01 and decreased by a factor of
10 after 35k iterations. The mini-batch size was 100.
For the final 50k iterations fine-tuning we increased the drop-out
rate for {\em all} layers
in a stack to ($0.0, 0.1, 0.2, 0.3, 0.4, 0.5, 0.0$),
thereafter increased the drop-out rate by a factor of 1.5 for 40k additional iterations.
\begin{table}[th!]
\begin{center}
\caption{Comparison of ELU networks and other CNNs on
CIFAR-10 and CIFAR-100.
Reported is the test error in percent misclassification for ELU networks and recent convolutional
architectures like AlexNet, DSN, NiN, Maxout, All-CNN, Highway Network, and Fractional Max-Pooling.
Best results are in bold. ELU networks are second best for CIFAR-10
and best for CIFAR-100.}
\label{tab:tab_res1}%
\begin{tabular}{*{1}{>{\raggedright\arraybackslash}p{8em}}*{2}{>{\raggedleft\arraybackslash}p{11.5em}}*{1}{>{\raggedleft\arraybackslash}p{4.25em}}}
\toprule[1pt]
\addlinespace[2pt]
\bf{Network} &  \bf{CIFAR-10 (test error \%)} & \bf{CIFAR-100 (test error \%)} & \bf{augmented} \\
\toprule[1pt]
AlexNet  & 18.04 &  45.80 &\\
DSN   & 7.97   & 34.57   & $\surd$\\
NiN & 8.81 &  35.68 & $\surd$\\
Maxout & 9.38 & 38.57  & $\surd$\\
All-CNN & 7.25 & 33.71  & $\surd$\\
Highway Network & 7.60 & 32.24  & $\surd$\\
Fract. Max-Pooling & \bf{4.50} & 27.62  & $\surd$\\
ELU-Network & 6.55 & \bf{24.28} &\\
\bottomrule
\end{tabular}
\end{center}
\end{table}

ELU networks are compared to following
recent successful CNN
architectures: AlexNet \citep{Krizhevsky:12}, DSN \citep{Lee:15},
NiN \citep{Min:13}, Maxout \citep{Goodfellow:13},
All-CNN \citep{Springenberg:14}, Highway Network \citep{Srivastava:15}
and Fractional Max-Pooling \citep{Graham:14}.
The test error in percent misclassification are given in Tab.~\ref{tab:tab_res1}.
ELU-networks are the second best on CIFAR-10 with a test error of
6.55\% but still they are among the top 10 best results reported for CIFAR-10.
ELU networks performed best on CIFAR-100 with a test error of 24.28\%.
This is the best published result on CIFAR-100,
without even resorting to multi-view evaluation or model averaging.

\subsection{ImageNet Challenge Dataset}
\label{sec:ImageNet}

Finally, we evaluated ELU-networks on the 1000-class ImageNet dataset.
It contains about 1.3M training color images
as well as additional 50k images and 100k images for validation and
testing, respectively.
For this task, we designed a 15 layer CNN,
which was arranged in stacks of
($1\times96\times6, 3\times512\times3, 5\times768\times3,
3\times1024\times3, 2\times4096\times FC, 1\times1000\times FC$)
layers $\times$ units $\times$ receptive fields or fully-connected (FC).
2$\times$2 max-pooling with a stride of 2 was applied after
each stack and spatial pyramid pooling (SPP) with 3 levels before
the first FC layer \citep{He:15}.
For network regularization we set the $L2$-weight decay term to $0.0005$ and
used 50\% drop-out in the two penultimate FC layers.
Images were re-sized to 256$\times$256 pixels and per-pixel mean subtracted.
Trained was on $224\times224$ random crops with random
horizontal flipping. Besides that, we did not augment the dataset during training.
\begin{figure*}[!ht]
\begin{center}
\subfigure[Training loss]{
\includegraphics[angle=0,width= 0.32\textwidth]{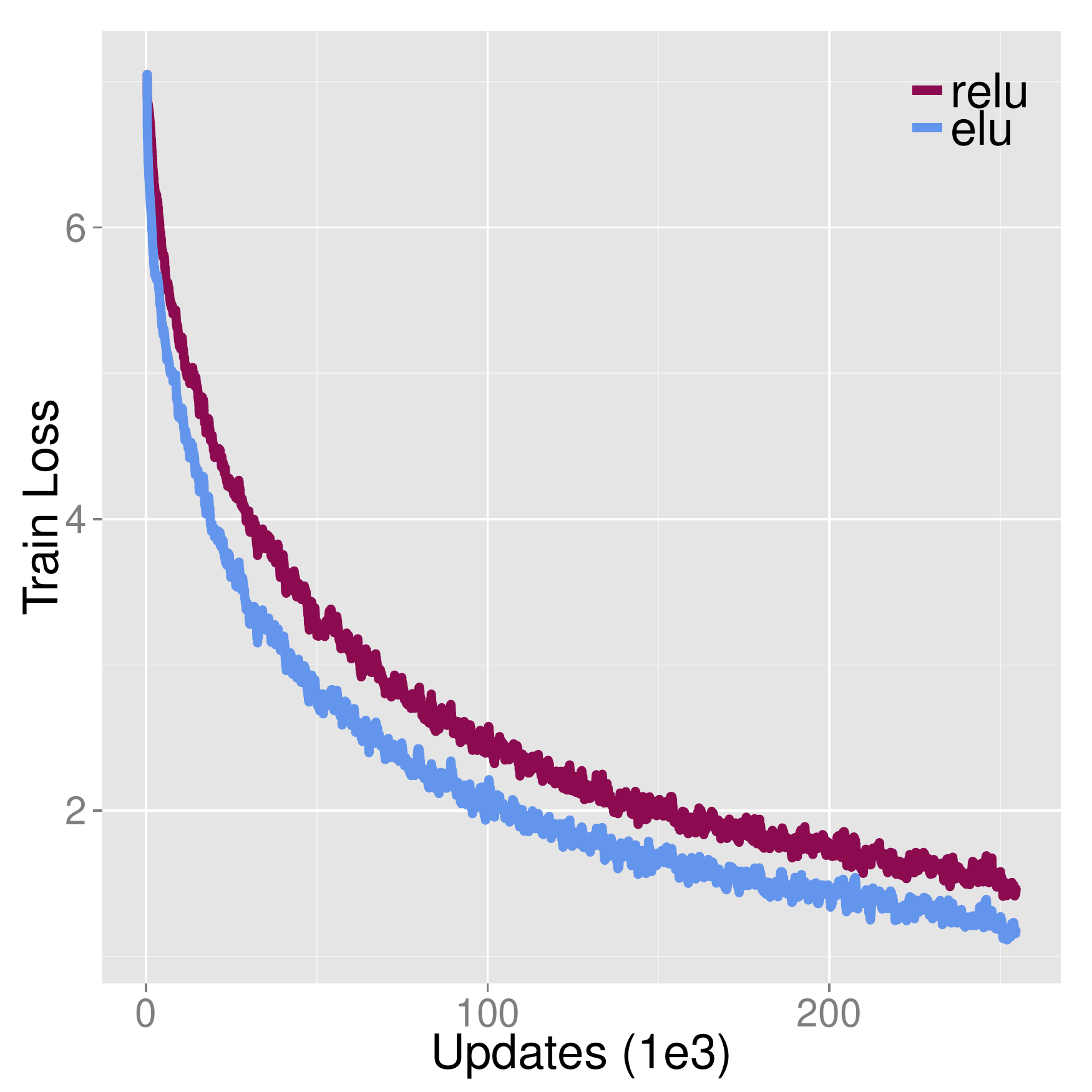}}
\subfigure[Top-5 test error]{
\includegraphics[angle=0,width= 0.32\textwidth]{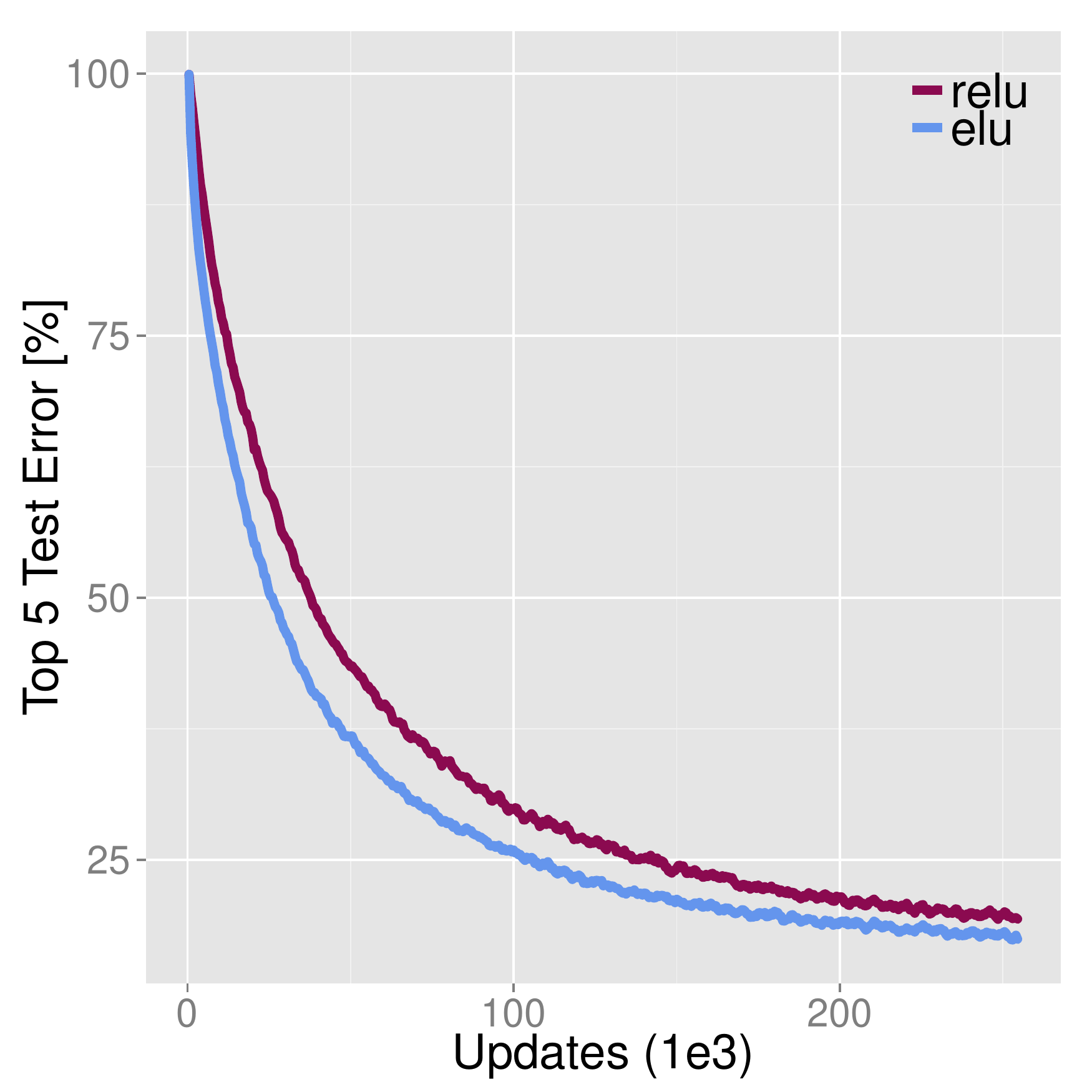}}
\subfigure[Top-1 test error]{
\includegraphics[angle=0,width= 0.32\textwidth]{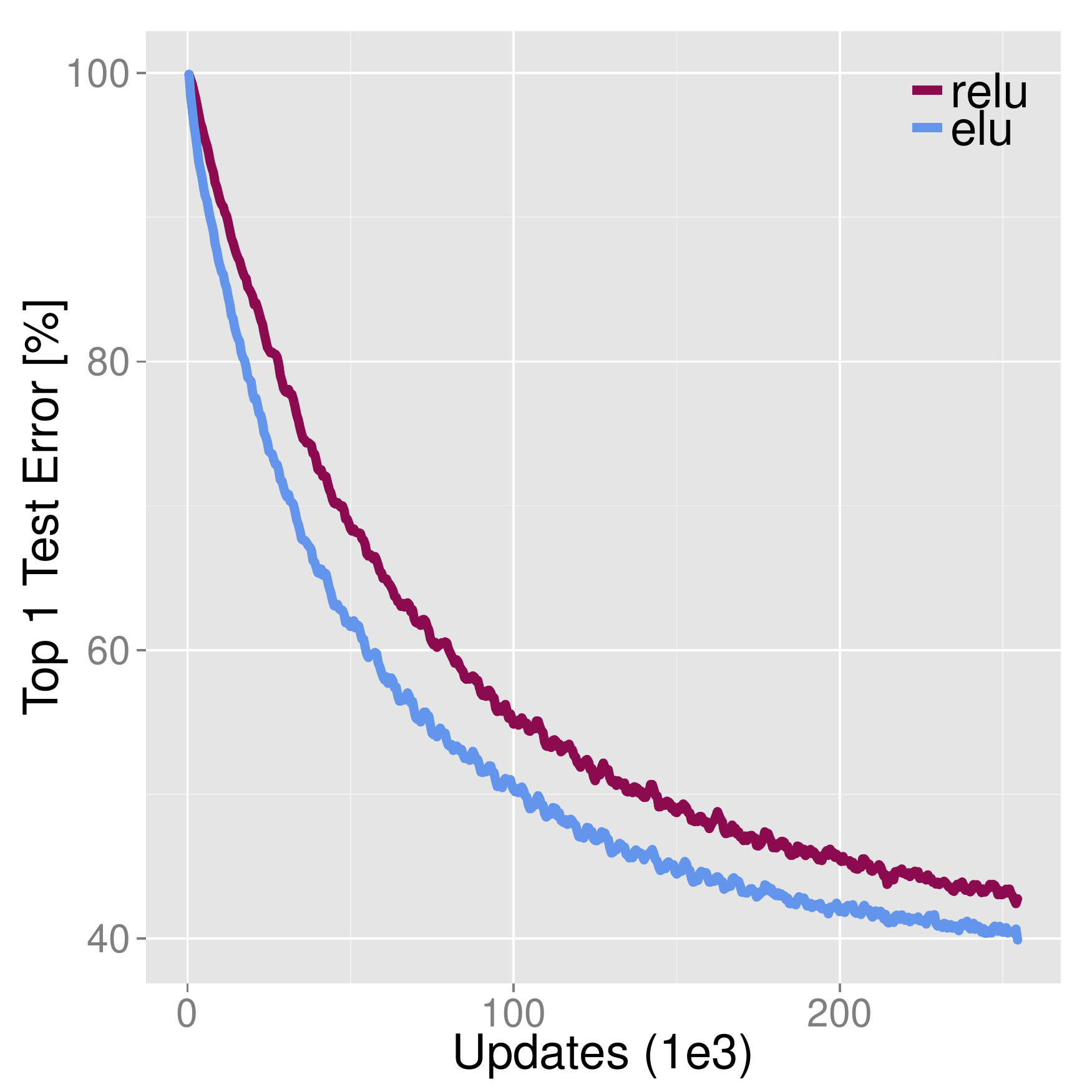}}
\caption{ ELU networks applied to ImageNet.
The $x$-axis gives the number of iterations and the $y$-axis the (a)
training loss, (b) top-5 error, and (c) the top-1 error
of 5,000 random validation samples, evaluated on the center crop.
Both activation functions ELU (blue) and ReLU (purple) lead for
convergence, but ELUs start reducing the error earlier and
reach the 20\% top-5 error after 160k iterations,
while ReLUs need 200k iterations to reach the same error rate. \label{fig:resImageNet}}
\end{center}
\vspace*{-5pt}
\end{figure*}

Fig.~\ref{fig:resImageNet} shows the learning behavior of
ELU vs.\ ReLU networks.
Panel (b) shows that ELUs start reducing the error earlier.
The ELU-network already reaches the 20\% top-5 error after 160k
iterations, while the ReLU network needs 200k
iterations to reach the same error rate.
The single-model performance was evaluated on the single center crop
with no further augmentation and yielded a top-5 validation error
below 10\%.

Currently ELU nets are 5\% slower on ImageNet than ReLU nets.
The difference is small because
activation functions generally have only minor influence on the overall
training time \citep{Jia:14}.
In terms of wall clock time, ELUs require 12.15h vs.\ ReLUs with
11.48h for 10k iterations.
We expect that ELU implementations can be improved, e.g.\ by faster
exponential functions \citep{Schraudolph:99}.

\section{Conclusion}
\label{sec:concl}
We have introduced the {\em exponential linear units} (ELUs) for faster and
more precise learning in deep neural networks.
ELUs have negative values, which allows the network to push
the mean activations closer to zero. Therefore ELUs decrease the gap
between the normal gradient and the unit natural gradient and, thereby
speed up learning.
We believe that this property is also the reason for the success
of activation functions like LReLUs and PReLUs and of batch
normalization. In contrast to LReLUs and PReLUs,
ELUs have a clear saturation plateau in its negative regime, allowing them
to learn a more robust and stable representation.
Experimental results show
that ELUs significantly outperform other activation functions
on different vision datasets. Further ELU networks perform
significantly better than ReLU networks trained with batch normalization.
ELU networks achieved one of the top 10 best reported results on CIFAR-10 and
set a new state of the art in
CIFAR-100 without the need for multi-view test evaluation or model averaging.
Furthermore, ELU networks produced competitive results on the ImageNet in much
fewer epochs than a corresponding ReLU network.
Given their outstanding performance, we expect
ELU networks to become a real time saver in convolutional networks, which are
notably time-intensive to train from scratch otherwise.

\paragraph*{Acknowledgment.} We thank the NVIDIA Corporation for supporting this research
with several Titan X GPUs and Roland Vollgraf and Martin Heusel for helpful discussions and
comments on this work.

\small{
\setlength{\bibsep}{1.5pt}
\bibliographystyle{iclr2016_conference}
\bibliography{activation}

\begin{thebibliography}{35}
\providecommand{\natexlab}[1]{#1}
\providecommand{\url}[1]{\texttt{#1}}
\expandafter\ifx\csname urlstyle\endcsname\relax
  \providecommand{\doi}[1]{doi: #1}\else
  \providecommand{\doi}{doi: \begingroup \urlstyle{rm}\Url}\fi

\bibitem[Amari(1998)]{Amari:98}
Amari, S.-I.
\newblock Natural gradient works efficiently in learning.
\newblock \emph{Neural Computation}, 10\penalty0 (2):\penalty0 251--276, 1998.

\bibitem[Clevert et~al.(2015)Clevert, Unterthiner, Mayr, and
  Hochreiter]{Clevert:15nips}
Clevert, D.-A., Unterthiner, T., Mayr, A., and Hochreiter, S.
\newblock Rectified factor networks.
\newblock In Cortes, C., Lawrence, N.~D., Lee, D.~D., Sugiyama, M., and
  Garnett, R. (eds.), \emph{Advances in Neural Information Processing Systems
  28}. Curran Associates, Inc., 2015.

\bibitem[Desjardins et~al.(2015)Desjardins, Simonyan, Pascanu, and
  Kavukcuoglu]{Desjardins:15}
Desjardins, G., Simonyan, K., Pascanu, R., and Kavukcuoglu, K.
\newblock Natural neural networks.
\newblock \emph{CoRR}, abs/1507.00210, 2015.
\newblock URL \url{http://arxiv.org/abs/1507.00210}.

\bibitem[Glorot et~al.(2011)Glorot, Bordes, and Bengio]{Glorot:11}
Glorot, X., Bordes, A., and Bengio, Y.
\newblock Deep sparse rectifier neural networks.
\newblock In Gordon, G., Dunson, D., and Dudík, M. (eds.), \emph{JMLR W\&CP:
  Proceedings of the Fourteenth International Conference on Artificial
  Intelligence and Statistics (AISTATS 2011)}, volume~15, pp.\  315--323, 2011.

\bibitem[Goodfellow et~al.(2013)Goodfellow, Warde-Farley, Mirza, Courville, and
  Bengio]{Goodfellow:13}
Goodfellow, I.~J., Warde-Farley, D., Mirza, M., Courville, A., and Bengio, Y.
\newblock Maxout networks.
\newblock \emph{ArXiv e-prints}, 2013.

\bibitem[Graham(2014)]{Graham:14}
Graham, Benjamin.
\newblock Fractional max-pooling.
\newblock \emph{CoRR}, abs/1412.6071, 2014.
\newblock URL \url{http://arxiv.org/abs/1412.6071}.

\bibitem[Grosse \& Salakhudinov(2015)Grosse and Salakhudinov]{Grosse:15}
Grosse, R. and Salakhudinov, R.
\newblock Scaling up natural gradient by sparsely factorizing the inverse
  {Fisher} matrix.
\newblock \emph{Journal of Machine Learning Research}, 37:\penalty0 2304--2313,
  2015.
\newblock URL \url{http://jmlr.org/proceedings/papers/v37/grosse15.pdf}.
\newblock Proceedings of the 32nd International Conference on Machine Learning
  ({ICML}15).

\bibitem[He et~al.(2015)He, Zhang, Ren, and Sun]{He:15}
He, K., Zhang, X., Ren, S., and Sun, J.
\newblock Delving deep into rectifiers: Surpassing human-level performance on
  imagenet classification.
\newblock In \emph{{IEEE} International Conference on Computer Vision
  ({ICCV})}, 2015.

\bibitem[Hochreiter(1998)]{Hochreiter:98fuzzy}
Hochreiter, S.
\newblock The vanishing gradient problem during learning recurrent neural nets
  and problem solutions.
\newblock \emph{International Journal of Uncertainty, Fuzziness and
  Knowledge-Based Systems}, 6\penalty0 (2):\penalty0 107--116, 1998.

\bibitem[Hochreiter \& Schmidhuber(1999)Hochreiter and
  Schmidhuber]{Hochreiter:99nc}
Hochreiter, S. and Schmidhuber, J.
\newblock Feature extraction through {LOCOCODE}.
\newblock \emph{Neural Computation}, 11\penalty0 (3):\penalty0 679--714, 1999.

\bibitem[Hochreiter et~al.(2001)Hochreiter, Bengio, Frasconi, and
  Schmidhuber]{Hochreiter:2001}
Hochreiter, S., Bengio, Y., Frasconi, P., and Schmidhuber, J.
\newblock Gradient flow in recurrent nets: the difficulty of learning long-term
  dependencies.
\newblock In Kremer and Kolen (eds.), \emph{A Field Guide to Dynamical
  Recurrent Neural Networks}. IEEE Press, 2001.

\bibitem[Ioffe \& Szegedy(2015)Ioffe and Szegedy]{Ioffe:15}
Ioffe, S. and Szegedy, C.
\newblock Batch normalization: Accelerating deep network training by reducing
  internal covariate shift.
\newblock \emph{Journal of Machine Learning Research}, 37:\penalty0 448--456,
  2015.
\newblock URL \url{http://jmlr.org/proceedings/papers/v37/ioffe15.pdf}.
\newblock Proceedings of the 32nd International Conference on Machine Learning
  ({ICML}15).

\bibitem[Jia(2014)]{Jia:14}
Jia, Yangqing.
\newblock \emph{Learning Semantic Image Representations at a Large Scale}.
\newblock PhD thesis, EECS Department, University of California, Berkeley, May
  2014.
\newblock URL
  \url{http://www.eecs.berkeley.edu/Pubs/TechRpts/2014/EECS-2014-93.html}.

\bibitem[Krizhevsky et~al.(2012)Krizhevsky, Sutskever, and
  Hinton]{Krizhevsky:12}
Krizhevsky, A., Sutskever, I., and Hinton, G.~E.
\newblock {ImageNet} classification with deep convolutional neural networks.
\newblock In Pereira, F., Burges, C. J.~C., Bottou, L., and Weinberger, K.~Q.
  (eds.), \emph{Advances in Neural Information Processing Systems 25}, pp.\
  1097--1105. Curran Associates, Inc., 2012.

\bibitem[Kurita(1993)]{Kurita:93}
Kurita, T.
\newblock Iterative weighted least squares algorithms for neural networks
  classifiers.
\newblock In \emph{Proceedings of the Third Workshop on Algorithmic Learning
  Theory ({ALT}92)}, volume 743 of \emph{Lecture Notes in Computer Science},
  pp.\  77--86. Springer, 1993.

\bibitem[LeCun et~al.(1991)LeCun, Kanter, and Solla]{LeCun:91}
LeCun, Y., Kanter, I., and Solla, S.~A.
\newblock Eigenvalues of covariance matrices: Application to neural-network
  learning.
\newblock \emph{Physical Review Letters}, 66\penalty0 (18):\penalty0
  2396--2399, 1991.

\bibitem[LeCun et~al.(1998)LeCun, Bottou, Orr, and M\"{u}ller]{LeCun:98}
LeCun, Y., Bottou, L., Orr, G.~B., and M\"{u}ller, K.-R.
\newblock Efficient backprop.
\newblock In Orr, G.~B. and M\"{u}ller, K.-R. (eds.), \emph{Neural Networks:
  Tricks of the Trade}, volume 1524 of \emph{Lecture Notes in Computer
  Science}, pp.\  9--50. Springer, 1998.

\bibitem[Lee et~al.(2015)Lee, Xie, Gallagher, Zhang, and Tu]{Lee:15}
Lee, Chen-Yu, Xie, Saining, Gallagher, Patrick~W., Zhang, Zhengyou, and Tu,
  Zhuowen.
\newblock Deeply-supervised nets.
\newblock In \emph{AISTATS}, 2015.

\bibitem[LeRoux et~al.(2008)LeRoux, Manzagol, and Bengio]{LeRoux:08}
LeRoux, N., Manzagol, P.-A., and Bengio, Y.
\newblock Topmoumoute online natural gradient algorithm.
\newblock In Platt, J.~C., Koller, D., Singer, Y., and Roweis, S.~T. (eds.),
  \emph{Advances in Neural Information Processing Systems 20 ({NIPS})}, pp.\
  849--856, 2008.

\bibitem[Lin et~al.(2013)Lin, Chen, and Yan]{Min:13}
Lin, Min, Chen, Qiang, and Yan, Shuicheng.
\newblock Network in network.
\newblock \emph{CoRR}, abs/1312.4400, 2013.
\newblock URL \url{http://arxiv.org/abs/1312.4400}.

\bibitem[Maas et~al.(2013)Maas, Hannun, and Ng]{Maas:13}
Maas, A.~L., Hannun, A.~Y., and Ng, A.~Y.
\newblock Rectifier nonlinearities improve neural network acoustic models.
\newblock In \emph{Proceedings of the 30th International Conference on Machine
  Learning ({ICML}13)}, 2013.

\bibitem[Martens(2010)]{Martens:10}
Martens, J.
\newblock Deep learning via {Hessian}-free optimization.
\newblock In F{\"{u}}rnkranz, J. and Joachims, T. (eds.), \emph{Proceedings of
  the 27th International Conference on Machine Learning ({ICML}10)}, pp.\
  735--742, 2010.

\bibitem[Mayr et~al.(2015)Mayr, Klambauer, Unterthiner, and
  Hochreiter]{Mayr:15}
Mayr, A., Klambauer, G., Unterthiner, T., and Hochreiter, S.
\newblock {DeepTox}: Toxicity prediction using deep learning.
\newblock \emph{Front. Environ. Sci.}, 3\penalty0 (80), 2015.
\newblock \doi{10.3389/fenvs.2015.00080}.
\newblock URL
  \url{http://journal.frontiersin.org/article/10.3389/fenvs.2015.00080}.

\bibitem[Nair \& Hinton(2010)Nair and Hinton]{Nair:10}
Nair, V. and Hinton, G.~E.
\newblock Rectified linear units improve restricted {Boltzmann} machines.
\newblock In F\"{u}rnkranz, J. and Joachims, T. (eds.), \emph{Proceedings of
  the 27th International Conference on Machine Learning ({ICML}10)}, pp.\
  807--814, 2010.

\bibitem[Olivier(2013)]{Olivier:13}
Olivier, Y.
\newblock Riemannian metrics for neural networks i: feedforward networks.
\newblock \emph{CoRR}, abs/1303.0818, 2013.
\newblock URL \url{http://arxiv.org/abs/1303.0818}.

\bibitem[Pascanu \& Bengio(2014)Pascanu and Bengio]{Pascanu:14}
Pascanu, R. and Bengio, Y.
\newblock Revisiting natural gradient for deep networks.
\newblock In \emph{International Conference on Learning Representations 2014},
  2014.
\newblock URL \url{http://arxiv.org/abs/1301.3584}.
\newblock arXiv:1301.3584.

\bibitem[Raiko et~al.(2012)Raiko, Valpola, and LeCun]{Raiko:12}
Raiko, T., Valpola, H., and LeCun, Y.
\newblock Deep learning made easier by linear transformations in perceptrons.
\newblock In Lawrence, N.~D. and Girolami, M.~A. (eds.), \emph{Proceedings of
  the 15th International Conference on Artificial Intelligence and Statistics
  ({AISTATS}12)}, volume~22, pp.\  924--932, 2012.

\bibitem[Schraudolph(1998)]{Schraudolph:98}
Schraudolph, N.~N.
\newblock Centering neural network gradient factor.
\newblock In Orr, G.~B. and M\"{u}ller, K.-R. (eds.), \emph{Neural Networks:
  Tricks of the Trade}, volume 1524 of \emph{Lecture Notes in Computer
  Science}, pp.\  207--226. Springer, 1998.

\bibitem[Schraudolph(1999)]{Schraudolph:99}
Schraudolph, Nicol~N.
\newblock {A Fast, Compact Approximation of the Exponential Function}.
\newblock \emph{Neural Computation}, 11:\penalty0 853--862, 1999.

\bibitem[Springenberg et~al.(2014)Springenberg, Dosovitskiy, Brox, and
  Riedmiller]{Springenberg:14}
Springenberg, Jost~Tobias, Dosovitskiy, Alexey, Brox, Thomas, and Riedmiller,
  Martin~A.
\newblock Striving for simplicity: The all convolutional net.
\newblock \emph{CoRR}, abs/1412.6806, 2014.
\newblock URL \url{http://arxiv.org/abs/1412.6806}.

\bibitem[Srivastava et~al.(2015)Srivastava, Greff, and
  Schmidhuber]{Srivastava:15}
Srivastava, Rupesh~Kumar, Greff, Klaus, and Schmidhuber, J{\"{u}}rgen.
\newblock Training very deep networks.
\newblock \emph{CoRR}, abs/1507.06228, 2015.
\newblock URL \url{http://arxiv.org/abs/1507.06228}.

\bibitem[Unterthiner et~al.(2015)Unterthiner, Mayr, Klambauer, and
  Hochreiter]{Unterthiner:15}
Unterthiner, T., Mayr, A., Klambauer, G., and Hochreiter, S.
\newblock Toxicity prediction using deep learning.
\newblock \emph{CoRR}, abs/1503.01445, 2015.
\newblock URL \url{http://arxiv.org/abs/1503.01445}.

\bibitem[Vinyals \& Povey(2012)Vinyals and Povey]{Vinyals:12}
Vinyals, O. and Povey, D.
\newblock {Krylov} subspace descent for deep learning.
\newblock In \emph{AISTATS}, 2012.
\newblock URL \url{http://arxiv.org/pdf/1111.4259v1}.
\newblock arXiv:1111.4259.

\bibitem[Xu et~al.(2015)Xu, Wang, Chen, and Li]{Xu:15}
Xu, B., Wang, N., Chen, T., and Li, M.
\newblock Empirical evaluation of rectified activations in convolutional
  network.
\newblock \emph{CoRR}, abs/1505.00853, 2015.
\newblock URL \url{http://arxiv.org/abs/1505.00853}.

\bibitem[Yang \& Amari(1998)Yang and Amari]{Yang:98}
Yang, H.~H. and Amari, S.-I.
\newblock Complexity issues in natural gradient descent method for training
  multilayer perceptrons.
\newblock \emph{Neural Computation}, 10\penalty0 (8), 1998.

\end{thebibliography}
}

\appendix

\section{Inverse of Block Matrices}
\label{sec:inverse}

\begin{lemma}
\label{th:lemma1}
The positive definite matrix $\BM$ is in block format with
matrix $\BA$, vector $\Bb$, and scalar $c$.
The inverse of $\BM$ is
\begin{align}
\BM^{-1} \ &= \
\begin{pmatrix}
\BA & \Bb \\
\Bb^T & c
\end{pmatrix}^{-1} \ = \
\begin{pmatrix}
\BK & \Bu \\
\Bu^T & s
\end{pmatrix} \ ,
\end{align}
where
\begin{align}
\BK \ &= \ \BA^{-1}  \ + \  \Bu \ s^{-1} \Bu^T \\
\Bu \ &= \ - \  s \ \BA^{-1} \ \Bb \\
s \ &= \ \left( c \ - \ \Bb^T\BA^{-1} \Bb \right)^{-1} \ .
\end{align}
\end{lemma}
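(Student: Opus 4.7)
The plan is to verify the claimed formula by direct block multiplication of $\BM$ with its proposed inverse, rather than re-derive it from scratch via Schur complements. As a preliminary, I would note that positive definiteness of $\BM$ implies $\BA$ is invertible (being a principal submatrix of a positive definite matrix) and that the Schur complement $c - \Bb^T \BA^{-1} \Bb$ is strictly positive, so $s$ is well defined as the reciprocal of that scalar, and $\Bu$ and $\BK$ inherit their well-definedness from $\BA^{-1}$ and $s$.

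The main computation is then to expand
\begin{align*}
\begin{pmatrix} \BA & \Bb \\ \Bb^T & c \end{pmatrix}
\begin{pmatrix} \BK & \Bu \\ \Bu^T & s \end{pmatrix}
\ = \
\begin{pmatrix} \BA\BK + \Bb\Bu^T & \BA\Bu + \Bb s \\ \Bb^T\BK + c\Bu^T & \Bb^T\Bu + cs \end{pmatrix}
\end{align*}
and check that the four blocks equal $\BI$, $\BZe$, $\BZe^T$, and $1$ respectively. The upper-right block vanishes immediately from the definition $\Bu = -s\BA^{-1}\Bb$, which yields $\BA\Bu = -s\Bb$ and hence $\BA\Bu + \Bb s = \BZe$. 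The lower-right entry reduces to $-s\Bb^T\BA^{-1}\Bb + cs = s(c - \Bb^T\BA^{-1}\Bb) = 1$ by the definition of $s$. For the upper-left block, substituting $\BK = \BA^{-1} + \Bu s^{-1}\Bu^T$ gives $\BI + (\BA\Bu s^{-1} + \Bb)\Bu^T$, and the parenthesized factor is zero by the same identity $\BA\Bu = -s\Bb$.

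The lower-left block is the only one that needs two substitutions chained together: it expands to $\Bb^T\BA^{-1} + (\Bb^T\Bu s^{-1} + c)\Bu^T$, where $\Bb^T\Bu s^{-1} + c = -\Bb^T\BA^{-1}\Bb + c = s^{-1}$, so the second term becomes $s^{-1}\Bu^T = -\Bb^T\BA^{-1}$ and the block collapses to $\BZe^T$. This is simply the standard Schur-complement block inversion identity, so there is no genuine obstacle; the only step that requires a little care is the lower-left block, since it combines both the scalar identity for $s$ and the vector identity for $\Bu$, while the other three blocks fall out in one line each.
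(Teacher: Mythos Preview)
Your proof is correct. The approach differs from the paper's: you verify the claimed inverse by direct block multiplication, checking each of the four blocks against the identity, whereas the paper quotes the standard block-inversion formula for a general $2\times 2$ partition (with matrix blocks $\BA,\BB,\BC$), then specializes $\BB\to\Bb$ and $\BC\to c$ and finally rewrites the resulting expressions in terms of $s$ and $\Bu$. Your route is more self-contained---it needs nothing beyond matrix multiplication and the well-definedness remarks you make up front---while the paper's route makes explicit that this is just the Schur-complement block inverse and would transfer verbatim to the case where the last block has dimension larger than one. One small point you leave implicit: in the lower-left block you use $\Bu^T=-s\,\Bb^T\BA^{-1}$, which relies on $\BA^{-1}$ being symmetric; this is immediate since $\BM$ (hence its principal submatrix $\BA$) is positive definite and therefore symmetric, but it is worth stating once.
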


\begin{proof}

For block matrices the inverse is
\begin{align}
\begin{pmatrix}
\BA & \BB \\
\BB^T & \BC
\end{pmatrix}^{-1} \ &= \
\begin{pmatrix}
\BK & \BU \\
\BU^T & \BS
\end{pmatrix} \ ,
\end{align}
where the matrices on the right hand side are:
\begin{align}
\BK \ &= \ \BA^{-1}  \ + \  \BA^{-1} \ \BB \ \left( \BC \ - \ \BB^T\BA^{-1} \BB
\right)^{-1} \BB^T \ \BA^{-1} \\
\BU \ &= \ - \  \BA^{-1} \ \BB \ \left( \BC \ - \ \BB^T\BA^{-1} \BB
\right)^{-1} \\
\BU^T \ &= \ - \ \left( \BC \ - \ \BB^T\BA^{-1} \BB
\right)^{-1}  \BB^T \ \BA^{-1} \\
\BS \ &= \ \left( \BC \ - \ \BB^T\BA^{-1} \BB
\right)^{-1} \ .
\end{align}
Further if follows that
\begin{align}
\BK   \ &= \ \BA^{-1} \ + \ \BU \ \BS^{-1} \BU^T \ .
\end{align}

We now use this formula for
$\BB=\Bb$ being a vector and $\BC=c$ a scalar.
We obtain
\begin{align}
\begin{pmatrix}
\BA & \Bb \\
\Bb^T & c
\end{pmatrix}^{-1} \ &= \
\begin{pmatrix}
\BK & \Bu \\
\Bu^T & s
\end{pmatrix} \ ,
\end{align}
where the right hand side matrices, vectors, and the scalar $s$ are:
\begin{align}
\BK \ &= \ \BA^{-1}  \ + \  \BA^{-1} \ \Bb \ \left( c \ - \ \Bb^T\BA^{-1} \Bb
\right)^{-1} \Bb^T \ \BA^{-1} \\
\Bu \ &= \ - \  \BA^{-1} \ \Bb \ \left( c \ - \ \Bb^T\BA^{-1} \Bb
\right)^{-1} \\
\Bu^T \ &= \ - \ \left( c \ - \ \Bb^T\BA^{-1} \Bb
\right)^{-1}  \Bb^T \ \BA^{-1} \\
s \ &= \ \left( c \ - \ \Bb^T\BA^{-1} \Bb
\right)^{-1} \ .
\end{align}
Again it follows that
\begin{align}
\BK   \ &= \ \BA^{-1} \ + \ \Bu \ s^{-1} \Bu^T \ .
\end{align}

A reformulation using $\Bu$ gives
\begin{align}
\BK \ &= \ \BA^{-1}  \ + \  \Bu \ s^{-1} \Bu^T \\
\Bu \ &= \ - \  s \ \BA^{-1} \ \Bb \\
\Bu^T \ &= \ - \ s \ \Bb^T \ \BA^{-1} \\
s \ &= \ \left( c \ - \ \Bb^T\BA^{-1} \Bb \right)^{-1} \ .
\end{align}
\end{proof}

\section{Quadratic Form of Mean and Inverse Second Moment}
\label{sec:second}

\begin{lemma}
\label{th:lemma2}
For a random variable $\Ba$ holds
\begin{align}
&\EXP^T(\Ba) \ \EXP^{-1}(\Ba \ \Ba^T) \ \EXP(\Ba) \ \leq \ 1
\end{align}
and
\begin{align}
&\left( 1 \ - \ \EXP^T(\Ba) \ \EXP^{-1}(\Ba \ \Ba^T) \ \EXP(\Ba)
\right)^{-1} \ = \ 1 \ + \ \EXP^T(\Ba) \ \VAR^{-1}(\Ba) \ \EXP(\Ba) \ .
\end{align}
Furthermore holds
\begin{align}
&\left( 1 \ - \ \EXP^T(\Ba) \ \EXP^{-1}(\Ba \ \Ba^T) \ \EXP(\Ba)
\right)^{-1} \ \left(1 \ - \ \EXP_p^T(\Ba) \ \EXP^{-1}(\Ba \ \Ba^T) \ \EXP(\Ba)\right)
\\ \nonumber
&= \ 1 \ + \  \left( \EXP(\Ba) \ - \ \EXP_p(\Ba) \right)^T
\VAR^{-1}(\Ba) \ \EXP(\Ba) \ .
\end{align}
\end{lemma}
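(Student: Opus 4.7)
The plan is to reduce all three claims to a single application of the Sherman--Morrison formula. The key observation is the identity
\begin{align*}
\EXP(\Ba \Ba^T) \ &= \ \VAR(\Ba) \ + \ \EXP(\Ba) \ \EXP^T(\Ba) \ ,
\end{align*}
which exhibits $\EXP(\Ba \Ba^T)$ as a rank-one update of $\VAR(\Ba)$. Applying Sherman--Morrison (assuming $\VAR(\Ba)$ is invertible, as the statement implicitly requires) yields
\begin{align*}
\EXP^{-1}(\Ba \Ba^T) \ &= \ \VAR^{-1}(\Ba) \ - \ \frac{\VAR^{-1}(\Ba) \ \EXP(\Ba) \ \EXP^T(\Ba) \ \VAR^{-1}(\Ba)}{1 \ + \ \EXP^T(\Ba) \ \VAR^{-1}(\Ba) \ \EXP(\Ba)} \ .
\end{align*}
All three claims then reduce to substituting this expression and collecting terms.

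For the first two claims, I would abbreviate $q = \EXP^T(\Ba) \VAR^{-1}(\Ba) \EXP(\Ba)$ and sandwich the Sherman--Morrison identity between $\EXP^T(\Ba)$ on the left and $\EXP(\Ba)$ on the right. This produces
\begin{align*}
\EXP^T(\Ba) \ \EXP^{-1}(\Ba \Ba^T) \ \EXP(\Ba) \ &= \ q \ - \ \frac{q^2}{1+q} \ = \ \frac{q}{1+q} \ ,
\end{align*}
so $1 - \EXP^T(\Ba) \EXP^{-1}(\Ba \Ba^T) \EXP(\Ba) = 1/(1+q)$. Since $\VAR(\Ba) \succeq 0$ forces $q \geq 0$, the first inequality follows; taking reciprocals gives the second identity $1+q$.

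For the third claim, I would repeat the same sandwich but with $\EXP_p^T(\Ba)$ on the left rather than $\EXP^T(\Ba)$. The same algebra gives
\begin{align*}
\EXP_p^T(\Ba) \ \EXP^{-1}(\Ba \Ba^T) \ \EXP(\Ba) \ &= \ \frac{\EXP_p^T(\Ba) \ \VAR^{-1}(\Ba) \ \EXP(\Ba)}{1+q} \ ,
\end{align*}
so multiplying by $(1+q)$ yields
\begin{align*}
(1+q) \ \left( 1 \ - \ \EXP_p^T(\Ba) \ \EXP^{-1}(\Ba \Ba^T) \ \EXP(\Ba) \right) \ &= \ 1 \ + \ q \ - \ \EXP_p^T(\Ba) \ \VAR^{-1}(\Ba) \ \EXP(\Ba) \ ,
\end{align*}
which regroups into $1 + (\EXP(\Ba) - \EXP_p(\Ba))^T \VAR^{-1}(\Ba) \EXP(\Ba)$ as desired.

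No step is really an obstacle; the only care needed is the bookkeeping of the scalar $q$ appearing repeatedly and the implicit nonsingularity assumption on $\VAR(\Ba)$. If $\VAR(\Ba)$ is merely positive semidefinite, one would pass to a pseudoinverse or restrict to the support of $\Ba$, but the statement of the lemma already writes $\VAR^{-1}(\Ba)$, so I would proceed under the invertibility hypothesis without comment.
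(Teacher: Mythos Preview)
Your proposal is correct and follows essentially the same approach as the paper: both use the decomposition $\EXP(\Ba\Ba^T)=\VAR(\Ba)+\EXP(\Ba)\EXP^T(\Ba)$ together with the Sherman--Morrison formula to reduce everything to the scalar $q=\EXP^T(\Ba)\VAR^{-1}(\Ba)\EXP(\Ba)$, obtaining $\EXP^T(\Ba)\EXP^{-1}(\Ba\Ba^T)\EXP(\Ba)=q/(1+q)$ and the mixed version with $\EXP_p$ on the left, from which all three claims follow. The only cosmetic difference is that the paper first isolates the generic identity $\Bc^T(\BA+\Bb\Bb^T)^{-1}\Bb=\Bc^T\BA^{-1}\Bb/(1+\Bb^T\BA^{-1}\Bb)$ before specializing, whereas you write the full inverse and then sandwich; the content is the same.
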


\begin{proof}
The Sherman-Morrison Theorem states
\begin{align}
&\left(\BA \ + \ \Bb \ \Bc^T \right)^{-1} \ = \ \BA^{-1} \ - \ \frac{\BA^{-1}
    \ \Bb \ \Bc^T \ \BA^{-1}}{1 \ + \ \Bc^T \BA^{-1} \Bb} \ .
\end{align}
Therefore we have
\begin{align}
\label{eq:sherman}
&\Bc^T \left(\BA \ + \ \Bb \ \Bb^T\right)^{-1} \Bb \ = \ \Bc^T \BA^{-1} \Bb \ - \ \frac{\Bc^T \BA^{-1}
    \ \Bb \ \Bb^T \ \BA^{-1} \Bb}{1 \ + \ \Bb^T \BA^{-1} \Bb} \\ \nonumber
&= \ \frac{\Bc^T \BA^{-1} \Bb \ \left( 1 \ + \ \Bb^T \BA^{-1} \Bb
  \right) \ - \ \left(\Bc^T \BA^{-1}
    \ \Bb\right)\left(\Bb^T \BA^{-1}
    \ \Bb\right)  }{1 \ + \ \Bb^T \BA^{-1} \Bb} \\ \nonumber
&= \ \frac{\Bc^T \BA^{-1} \Bb}{1 \ + \ \Bb^T \BA^{-1} \Bb} \ .
\end{align}

Using the identity
\begin{align}
&\EXP(\Ba \ \Ba^T) \ = \ \VAR(\Ba) \ + \ \EXP(\Ba) \ \EXP^T(\Ba)
\end{align}
for the second moment and Eq.~\eqref{eq:sherman}, we get
\begin{align}
&\EXP^T(\Ba) \ \EXP^{-1}(\Ba \ \Ba^T) \ \EXP(\Ba)
\ = \ \EXP^T(\Ba) \ \left( \VAR(\Ba) \ + \ \EXP(\Ba) \ \EXP^T(\Ba)
\right)^{-1} \ \EXP(\Ba) \\ \nonumber
&= \ \frac{\EXP^T(\Ba) \ \VAR^{-1}(\Ba) \ \EXP(\Ba)}{1 \ + \
  \EXP^T(\Ba) \ \VAR^{-1}(\Ba) \ \EXP(\Ba)}
\ \leq \ 1 \ .
\end{align}
The last inequality follows from the fact that $\VAR(\Ba)$ is positive definite.
From last equation, we obtain further
\begin{align}
&\left( 1 \ - \ \EXP^T(\Ba) \EXP^{-1}(\Ba \ \Ba^T) \ \EXP(\Ba)
\right)^{-1} \ = \ 1 \ + \ \EXP^T(\Ba) \VAR^{-1}(\Ba) \ \EXP(\Ba) \ .
\end{align}

For the mixed quadratic form we get from Eq.~\eqref{eq:sherman}
\begin{align}
&\EXP_p^T(\Ba) \ \EXP^{-1}(\Ba \ \Ba^T) \ \EXP(\Ba)
\ = \ \EXP_p^T(\Ba) \ \left( \VAR(\Ba) \ + \ \EXP(\Ba) \ \EXP^T(\Ba)
\right)^{-1} \ \EXP(\Ba) \\ \nonumber
&= \ \frac{\EXP_p^T(\Ba) \ \VAR^{-1}(\Ba) \ \EXP(\Ba)}{1 \ + \
  \EXP^T(\Ba) \ \VAR^{-1}(\Ba) \ \EXP(\Ba)}
\ .
\end{align}
From this equation follows
\begin{align}
&1 \ - \ \EXP_p^T(\Ba) \ \EXP^{-1}(\Ba \ \Ba^T) \ \EXP(\Ba)
\ = \ 1 \ - \ \frac{\EXP_p^T(\Ba) \ \VAR^{-1}(\Ba) \ \EXP(\Ba)}{1 \ + \
  \EXP^T(\Ba) \ \VAR^{-1}(\Ba) \ \EXP(\Ba)}
 \\ \nonumber
&= \ \frac{1 \ + \  \EXP^T(\Ba) \ \VAR^{-1}(\Ba) \ \EXP(\Ba) \ - \
  \EXP_p^T(\Ba) \ \VAR^{-1}(\Ba) \ \EXP(\Ba)}{1 \ + \ \EXP^T(\Ba) \ \VAR^{-1}(\Ba) \ \EXP(\Ba)}
\ = \ \frac{1 \ + \  \left( \EXP(\Ba) \ - \ \EXP_p(\Ba) \right)^T
  \VAR^{-1}(\Ba) \ \EXP(\Ba) }{1 \ + \ \EXP^T(\Ba) \ \VAR^{-1}(\Ba) \ \EXP(\Ba)}
\ .
\end{align}

Therefore we get
\begin{align}
&\left( 1 \ - \ \EXP^T(\Ba) \ \EXP^{-1}(\Ba \ \Ba^T) \ \EXP(\Ba)
\right)^{-1} \ \left(1 \ - \ \EXP_p^T(\Ba) \ \EXP^{-1}(\Ba \ \Ba^T) \ \EXP(\Ba)\right)
\\ \nonumber
&= \ 1 \ + \  \left( \EXP(\Ba) \ - \ \EXP_p(\Ba) \right)^T
\VAR^{-1}(\Ba) \ \EXP(\Ba) \ .
\end{align}

\end{proof}

\section{Variance of Mean Activations in ELU and ReLU Networks}
\label{sec:varianceMean}
To compare the variance of median activation in ReLU and ELU networks,
we trained a neural network with 5 hidden layers of 256 hidden units for 200
epochs using a learning rate of 0.01, once using ReLU and once using ELU
activation functions on the MNIST dataset.
After each epoch, we calculated the median activation
of each hidden unit on the whole training set.
We then calculated the variance of these changes, which is depicted in
Figure~\ref{fig:varianceofchanges}\, . The median varies much more in ReLU networks.
This indicates that ReLU networks continuously try to correct
the bias shift introduced by previous weight updates
while this effect is much less prominent in ELU networks.
\begin{figure*}[!ht]
\begin{center}
\includegraphics[width= 0.7\textwidth]{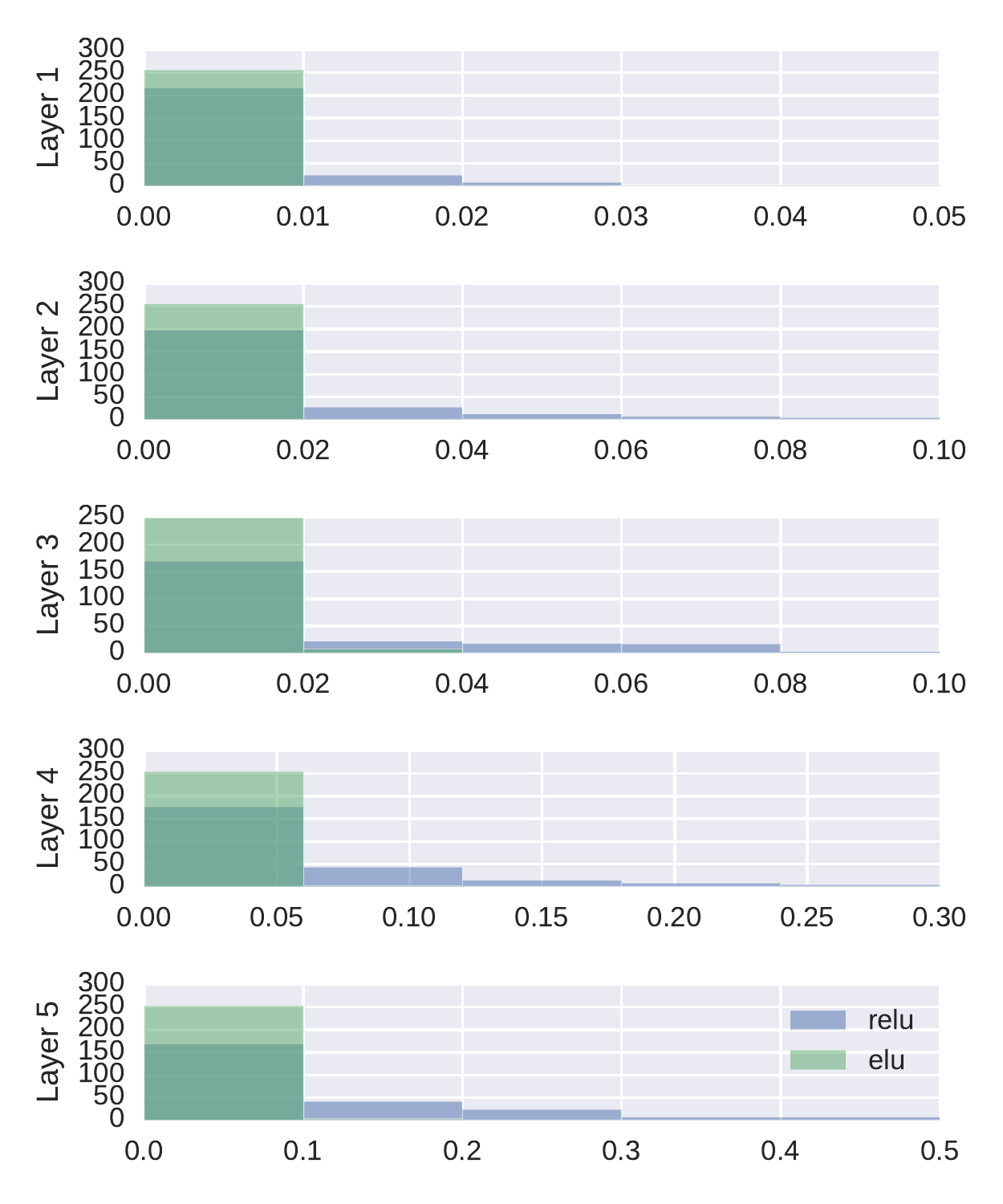}
\caption{Distribution of variances of the median hidden unit activation
after each epoch of MNIST training. Each row represents the units in
a different layer of the network.
\label{fig:varianceofchanges}}
\end{center}
\end{figure*}

\end{document}